\def\eqref#1{equation~\ref{#1}}
\def\1{\bm{1}}
\DeclareMathAlphabet{\mathsfit}{\encodingdefault}{\sfdefault}{m}{sl}
\SetMathAlphabet{\mathsfit}{bold}{\encodingdefault}{\sfdefault}{bx}{n}
\title{BiBERT: Accurate Fully Binarized BERT}
\author{Haotong Qin$^{* 1,4}$, Yifu Ding$^{* 1,4}$,
Mingyuan Zhang$^{*2}$, Qinghua Yan$^{1}$, Aishan Liu$^{1}$, \\ 
\textbf{Qingqing Dang$^{3}$, Ziwei Liu$^{2}$, Xianglong Liu$^{\text{\Letter}\,\,\,1}$}\\
$^{1}$State Key Lab of Software Development Environment, Beihang University\quad
$^{3}$Baidu Inc.\\
$^{2}$S-Lab, Nanyang Technological University\quad
$^{4}$Shen Yuan Honors College, Beihang University \quad\\
\texttt{\small\{qinhaotong,yifuding,yanqh,aishanliu,xlliu\}@buaa.edu.cn}\\
\texttt{\small mingyuan001@e.ntu.edu.sg zwliu.hust@gmail.com dangqingqing@baidu.com}
\vspace{-0.2in}
}
\newtheorem{theorem}{Theorem}
\newtheorem{theorem1}{Theorem}
\newtheorem{proposition1}{Proposition}
\newcommand\blfootnote[1]{%
\begingroup
\renewcommand\thefootnote{}\footnote{#1}%
\addtocounter{footnote}{-1}%
\endgroup
}
\newcommand{\chuhao}{\fontsize{6.5pt}{\baselineskip}\selectfont}
\begin{document}

\maketitle
\begin{abstract}
The large pre-trained BERT has achieved remarkable performance on Natural Language Processing (NLP) tasks but is also computation and memory expensive. 
As one of the powerful compression approaches, binarization extremely reduces the computation and memory consumption by utilizing 1-bit parameters and bitwise operations. 
Unfortunately, the full binarization of BERT (\textit{i.e.}, 1-bit weight, embedding, and activation) usually suffer a significant performance drop, and there is rare study addressing this problem. 
In this paper, with the theoretical justification and empirical analysis, we identify that the severe performance drop can be mainly attributed to the information degradation and optimization direction mismatch respectively in the forward and backward propagation, and propose \textbf{BiBERT}, an accurate fully binarized BERT, to eliminate the performance bottlenecks. Specifically, BiBERT introduces an efficient \textit{Bi-Attention} structure for maximizing representation information statistically and a \textit{Direction-Matching Distillation} (DMD) scheme to optimize the full binarized BERT accurately. Extensive experiments show that BiBERT outperforms both the straightforward baseline and existing state-of-the-art quantized BERTs with ultra-low bit activations by convincing margins on the NLP benchmark. As the first fully binarized BERT, our method yields impressive ${56.3\times}$ and ${31.2\times}$ saving on FLOPs and model size, demonstrating the vast advantages and potential of the fully binarized BERT model in real-world resource-constrained scenarios.
\end{abstract}
\vspace{-0.15in}
\section{Introduction}
\label{sec:Introduction}

\begin{wrapfigure}[13]{r}{0.25\textwidth}
  \vspace{-0.82in}
  \begin{center}
    \includegraphics[width=0.24\textwidth]{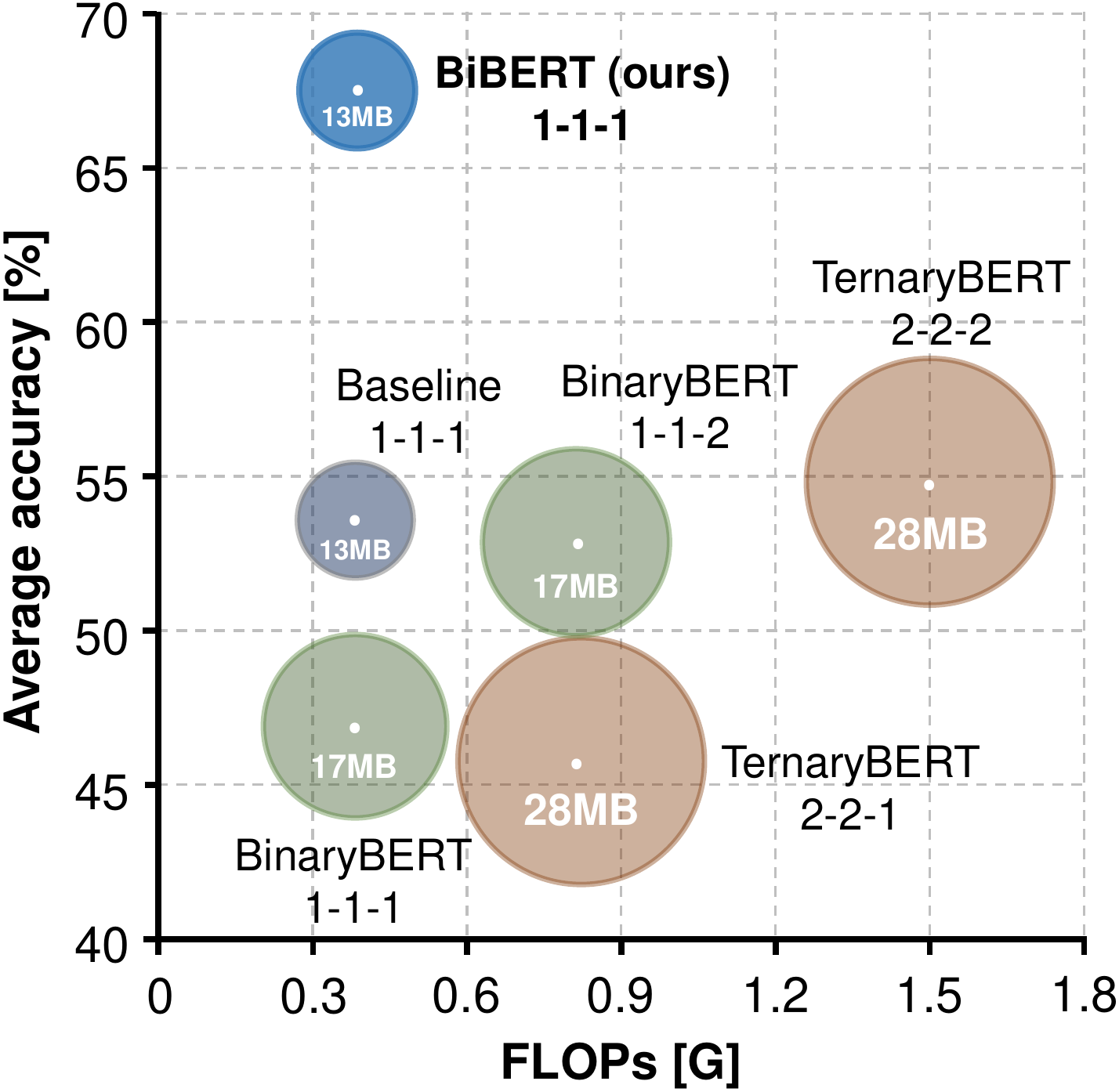}
  \end{center}
  \vspace{-0.2in}
  \caption{Accuracy vs. FLOPs \& size. Our BiBERT enjoys most computation and storage savings while surpassing SOTA quantized BERTs on GLUE benchmark with low bit activation.}
  \vspace{-0.1in}
  \label{fig:speed-size-acc}
\end{wrapfigure}

Recently, the pre-trained language models have shown great power in various natural language processing (NLP) tasks~\citep{GLUEbenchmark,qin2019stack,rajpurkar2016squad}.
In particular, BERT~\citep{devlin2018bert} significantly improves the state-of-the-art performance, while the massive parameters hinder their widespread deployment on edge devices in the real world.
Therefore, model compression has been actively studied to alleviate resource constraint issues, including quantization~\citep{Q-BERT,Q8BERT}, distillation~\citep{TinyBERT,xu2020improving}, pruning~\citep{mccarley2019structured,gordon2020compressing}, parameter sharing~\citep{Lan2020ALBERT}, \textit{etc.}
Among them, quantization emerges as an efficient way to obtain the compact model by compressing the model parameters to lower bit-width representation, such as Q-BERT~\citep{Q-BERT}, Q8BERT~\citep{Q8BERT}, and GOBO~\citep{GOBO}.
However, the representation limitation and optimization difficulties come as a consequence of applying discrete quantization, triggering severe performance drop in quantized BERT.
Fortunately, distillation becomes a common remedy in quantization as an auxiliary optimization approach to tackle the performance drop, which encourages the quantized BERT to mimic the full-precision model to exploit knowledge in teacher's representation~\citep{BinaryBERT}.
\blfootnote{$^*$ equal contribution \quad $^\text{\Letter}$ corresponding author}

As the quantization scheme with the most aggressive bit-width~\citep{DoReFa-Net,wang2018two,xu2021recu}, full binarization of BERT (\textit{i.e.}, 1-bit weight, word embedding, and activation) further allows the model to utilize extremely compact 1-bit parameters and efficient bitwise operations, which can largely promote the applications of BERT on edge devices in the real world~\citep{wang2021learning,birealnet}.
However, although the BERT quantization equipped with distillation has been studied, it is still a significant challenge to binarize BERT to extremely 1-bit, especially for its activation.
Compared with the weight and word embedding in BERT, the binarization of activation brings the most severe drop, and the model even crashes in NLP tasks.
So far, previous studies have pushed down the weight and word embedding to be binarized~\citep{BinaryBERT}, but none of them have ever achieved to binarize BERT with 1-bit activation accurately. 

Therefore, we first build a fully binarized BERT baseline, a straightforward yet effective solution based on common techniques.
Our study finds that the performance drop of BERT with binarized 1-bit weight, activation, and embedding (called fully binarized BERT) comes from the information degradation of the attention mechanism in the forward propagation and the optimization direction mismatch of distillation in the backward propagation.
First, the attention mechanism makes BERT focus selectively on parts of the input and ignore the irrelevant content~\citep{vaswani2017attention,chorowski2015attention, DBLP:journals/corr/WuSCLNMKCGMKSJL16}. While our analysis shows that direct binarization leads to the almost complete degradation of the information of attention weight (Figure~\ref{fig:attention_heatmap}), which results in the invalidation of the selection ability for attention mechanism.
Second, the distillation for the fully binarized BERT baseline utilizes the attention score, the direct binding product of two binarized activations. 
However, we show that it causes severe optimization direction mismatch since the non-neglectable error between the defacto and expected optimization direction (Figure~\ref{fig:error_visual}).

\begin{figure}
\vspace{-0.5in}
\centering
\includegraphics[width=0.73\textwidth]{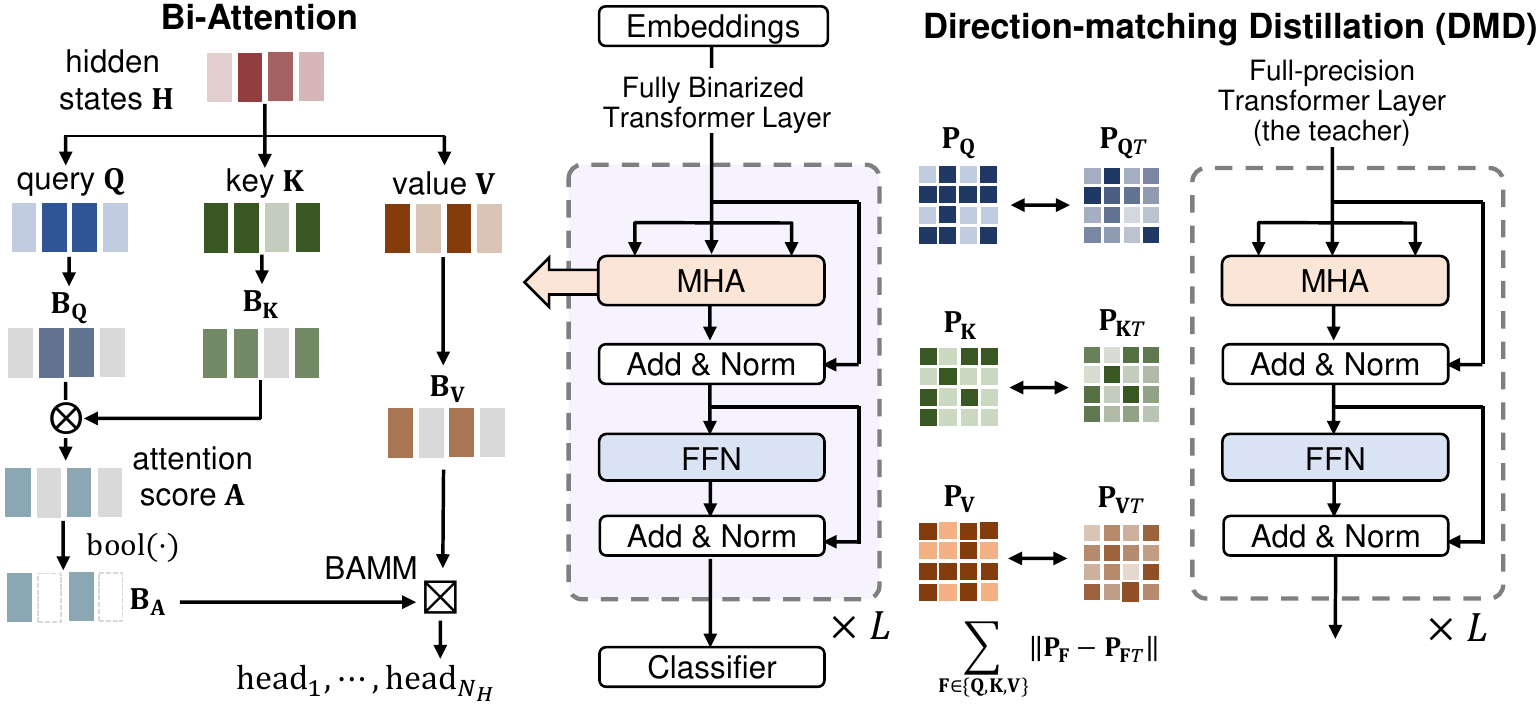}
\vspace{-0.1in}
\caption{Overview of our BiBERT, applying Bi-Attention structure for maximizing representation information and Direction-Matching Distillation (DMD) scheme for accurate optimization.}
\label{fig:overview}
\end{figure}

This paper provides empirical observations and theoretical formulations of the above-mentioned phenomena, and proposes a \textbf{BiBERT} to turn the full-precision BERT into the strong fully binarized model (see the overview in Figure~\ref{fig:overview}). 
To tackle the information degradation of attention mechanism, we introduce an efficient \textit{Bi-Attention} structure based on information theory. Bi-Attention applies binarized representations with maximized information entropy, allowing the binarized model to restore the perception of input contents.
Moreover, we developed the \textit{Direction-Matching Distillation} (DMD) scheme to eliminate the direction mismatch in distillation.
DMD takes appropriate activation and utilizes knowledge from constructed similarity matrices in distillation to optimize accurately.

Our BiBERT, for the first time, presents a promising route towards the accurate fully binarized BERT (with 1-bit weight, embedding, and activation). 
The extensive experiments on the GLUE \citep{GLUEbenchmark} benchmark show that our BiBERT outperforms existing quantized BERT models with ultra-lower bit activation by convincing margins. 
For example, the average accuracy of BiBERT exceeds 1-1-1 bit-width BinaryBERT ($1$-bit weight, $1$-bit embedding and $1$-bit quantization) by 20.4\% accuracy on average, and even better than 2-8-8 bit-width Q2BERT by 13.3\%.
Besides, we highlight that our BiBERT gives impressive $\mathbf{56.3\times}$ and $\mathbf{31.2\times}$ saving on FLOPs and model size, respectively, which shows the vast advantages and potential of the fully binarized BERT model in terms of fast inference and flexible deployment in real-world resource-constrained scenarios (Figure~\ref{fig:speed-size-acc}).
Our code is released at \textcolor{blue}{\href{https://github.com/htqin/BiBERT}{https://github.com/htqin/BiBERT}}.
\section{Building a Fully Binarized BERT Baseline}

First of all, we build a baseline to study the fully binarized BERT since it has never been proposed in previous works.
A straightforward solution is to binarize the representation in BERT architecture in the forward propagation and apply distillation to the optimization in the backward propagation.

\subsection{Binarized BERT Architecture}
\label{subsec:VanillaBinarizedBERTArchitecture}
We give a brief introduction to the architecture of binarized BERT.
In general, the forward and backward propagation of $\operatorname{sign}$ function in binarized network can be formulated as:
\begin{equation}
\label{eq:quantized}
\text{Forward: } \operatorname{sign}(x)=
\begin{cases}
1& \text{ if } x \ge 0\\
-1& \text{ otherwise }
\end{cases},\qquad
\text{Backward: } \frac{\partial C}{\partial x}=
\begin{cases}
\frac{\partial C}{\partial\operatorname{sign}(x)}& \text{ if } |x| \leq 1\\
0& \text{ otherwise }
\end{cases},
\end{equation}
where $C$ is the cost function for the minibatch.
$\operatorname{sign}$ function is applied in the forward propagation while the straight-through estimator (STE)~\citep{bengio2013estimating} is used to obtain the derivative in the backward propagation.
As for the weight of binarized linear layers, the common practice is to redistribute the weight to \emph{zero-mean} for retaining representation information~\citep{XNOR-Net,IR-Net} and applies scaling factors to minimize quantization errors~\citep{XNOR-Net}.
The activation is binarized by the $\operatorname{sign}$ without re-scaling for computational efficiency.
Thus, the computation can be expressed as
\begin{equation}
\operatorname{bi-linear}(\mathbf X) = \mathbf{\alpha}_\mathbf{w} ({\operatorname{sign}(\mathbf{X})}\otimes {\operatorname{sign}(\mathbf{W}-\mu(\mathbf{W}))}), \quad
\mathbf{\alpha}_\mathbf{w}=\frac{1}{n}\|\mathbf{W}\|_{\ell1},
\end{equation} 
where $\mathbf{W}$ and $\mathbf{X}$ denote full-precision weight and activation, $\mu(\cdot)$ denotes the mean value, $\mathbf{\alpha}_\mathbf{w}$ is the scaling factors for weight, and $\otimes$ denotes the matrix multiplication with bitwise $\operatorname{xnor}$ and $\operatorname{bitcount}$ as presented in Appendix~\ref{app:OperationinBi-Attention}.

The input data first passes through a binarized embedding layer before being fed into the transformer blocks \citep{TernaryBERT,BinaryBERT}.
And each transformer block consists of
two main components: Multi-Head Attention (MHA) module and Feed-Forward Network (FFN). 
The computation of MHA depends on queries $\mathbf{Q}$, keys $\mathbf{K}$ and values $\mathbf{V}$, which are derived from hidden states $\mathbf{H}\in\mathbb{R}^{N \times D}$. $N$ represents the length of the sequence, and $D$ represents the dimension of features.
For a specific transformer layer, the computation in an attention head can be expressed as
\begin{equation}
\mathbf{Q}={\operatorname{bi-linear}_Q}(\mathbf{H}), \quad 
\mathbf{K}={\operatorname{bi-linear}_K}(\mathbf{H}), \quad 
\mathbf{V}={\operatorname{bi-linear}_V}(\mathbf{H}),
\end{equation}
where ${\operatorname{bi-linear}_Q}, {\operatorname{bi-linear}_K}, {\operatorname{bi-linear}_V}$ represent three different binarized linear layers for $\mathbf{Q},\mathbf{K},\mathbf{V}$ respectively. Then we compute the attention score $\mathbf{A}$ as follow:
\begin{equation}
\mathbf{A}=\frac{1}{\sqrt{D}} \left(\mathbf{B_Q}\otimes \mathbf{B_K}^{\top}\right),\quad
\mathbf{B_Q}=\operatorname{sign}(\mathbf{Q}),\quad
\mathbf{B_K}=\operatorname{sign}(\mathbf{K}),
\end{equation}
where $\mathbf{B_Q}$ and $\mathbf{B_K}$ are the binarized query and key, respectively.
Note that the obtained attention weight is then truncated by attention mask, and each row in $\mathbf A$ can be regarded as a $k$-dim vector, where $k$ is the number of unmasked elements.
Then we binarize the attention weights ${\mathbf{B}_{\mathbf{A}}^{s}}$ as
\begin{equation}
\label{eq:att_weight}
{\mathbf{B}_{\mathbf{A}}^{s}}=\operatorname{sign}(\operatorname{softmax}(\mathbf{A})).
\end{equation}
We follow original BERT architecture to carry on the rest of MHA and FFN in the binarized network.


\subsection{Distillation for Binarized BERT}
\label{sec:distill}
Distillation is a common and essential optimization approach to alleviate the performance drop of quantized BERT under ultra-low bit-width settings, which can be unobstructedly applied for any architectures to utilize the knowledge of a full-precision teacher model~\citep{TinyBERT,BinaryBERT,TernaryBERT,wang2020minilm}.
The usual practice is to distill the attention score $\mathbf{A}_{Tl}$, MHA output $\mathbf{M}_{Tl}$, and hidden states $\mathbf{H}_{Tl}$ in a layerwise manner from the full-precision teacher network, and transfer to the binarized student counterparts, \textit{i.e.}, $\mathbf{A}_{l}$, $\mathbf{M}_{l}$, $\mathbf{H}_{l}$ ($l=1, ..., L$, where $L$ represents the number of transformer layers), respectively. We use the mean squared errors (MSE) as loss function to measure the difference between student and teacher networks for corresponding features:
\begin{equation}
\label{eq:vanilla_distill_form}
\ell_\text{att}=\sum_{l=1}^{L} \operatorname{MSE}({\mathbf{A}}_{l}, \mathbf{A}_{Tl}), \quad
\ell_\text{mha}=\sum_{l=1}^{L} \operatorname{MSE}({\mathbf{M}}_{l}, \mathbf{M}_{Tl}), \quad
\ell_\text{hid}=\sum_{l=1}^{L} \operatorname{MSE}({\mathbf{H}}_{l}, \mathbf{H}_{Tl}). 
\end{equation}
Then the prediction-layer distillation loss is conducted by minimizing the soft cross-entropy (SCE) between teacher logits ${\mathbf{y_T}}$ and student logits $\mathbf{y}$. The objective function is expressed as
\begin{equation}
\label{eq:vanilla_distll_loss}
\ell_{\text{distill}}=\ell_\text{att}+\ell_\text{mha}+\ell_\text{hid}+\ell_{\text{pred}}, \qquad
\ell_{\text{pred}}=\operatorname{SCE}\left({\mathbf{y}}, \mathbf{y}_{T}\right).
\end{equation}

\section{The Rise of BiBERT}

Although we can build a fully binarized BERT baseline and its training pipeline with common techniques, the performance is still of major concern.
Our study shows that the baseline suffers an immense information degradation of attention mechanism in the forward propagation and severe optimization direction mismatch in the backward propagation. 
To solve these problems, in this section, we propose the BiBERT with theoretical and experimental justifications.

\subsection{Bottlenecks of Fully Binarized BERT Baseline}
\label{subsec:ObservationforBottleneckofAccuracyBinarizedBERT}
Intuitively, in the fully binarized BERT baseline, the information representation capability largely depends on structures of architecture, such as attention, which is severely limited due to the binarized parameters, and the discrete binarization also makes the optimization more difficult. 
This means the bottlenecks of the fully binarized BERT baseline come from architecture and optimization for the forward and backward propagation, respectively.

\textbf{From the architecture perspective}, we observe the accuracy drop caused by binarizing single structure by replacing it with a full-precision counterpart as Figure~\ref{fig:observation}(a) shows. We find that binarizing MHA brings the most significant drop of accuracy among all parts of the BERT, up to 11.49\% (full-precision 89.09\% vs. binarized 77.60\%) and 11.72\% on SST-2 and QQP, respectively. While binarizing FFN and pooler layers brings less harm to the accuracy. Binarization of intermediate or output layer only brings 2.19\% and 5.52\% drop on SST-2, and binarization of pooler layer only causes 0.82\%. Same results can be seen on QQP. Thus, improving the attention structure is the highest priority to solve the accuracy drop of binarized BERT. 

\textbf{From the optimization perspective}, we eliminate each distillation term to demonstrate the benefits it brings in Figure~\ref{fig:observation}(b).
The results show that for most distillation terms, solely removing them in the distillation will harm the performance, \textit{e.g.}, removing the distillation of hidden states leads to 0.11\% and 1.62\% drop on SST-2 and QQP, and removing that of MHA outputs decrease the model accuracy to 73.24\% (4.36\% drop) on SST-2.
However, when the distillation loss of attention score is removed, the performance increases 0.81\% and 1.10\% on SST-2 and QQP respectively.
These observations inspire us to rethink the distillation of fully binarized BERT. It requires a new design that could utilize the full-precision teacher's knowledge better.

Based on the above experimental observations, we find that (1) the existing attention structure should NOT be directly applied in fully binarized BERT and (2) the distillation for the attention score in fully binarized BERT is actually harmful, which is contrary to the practice of many existing works. 
Thus in this paper, we first present the theoretical derivation of these two phenomena and then propose a well-designed attention structure and a novel distillation scheme for fully binarized BERT.

\begin{figure}
  \centering
  \vspace{-0.4in}
  \includegraphics[width=\textwidth]{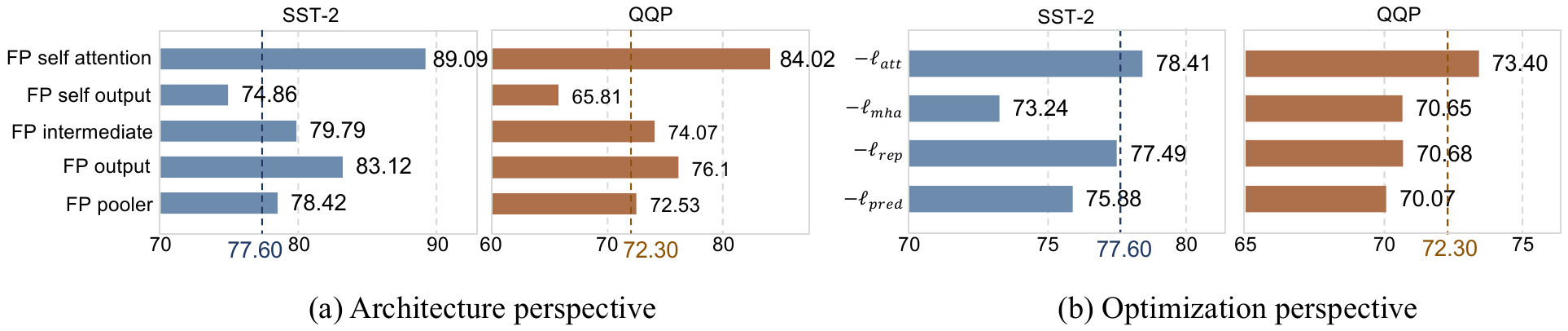}
  \vspace{-0.25in}
  \caption{Analysis of bottlenecks from architecture and optimization perspectives. We report the accuracy of binarized BERT on SST-2 and QQP tasks about (a) replace full-precision structure, (b) exclude one distillation knowledge.}
  \label{fig:observation}
\end{figure}

\subsection{Bi-Attention}
To address the information degradation of binarized representations in the forward propagation, we propose an efficient Bi-Attention structure based on information theory, which statistically maximizes the entropy of representation and revives the attention mechanism in the fully binarized BERT.

\subsubsection{Information Degradation in Attention Structure}
\label{subsubsec:InformationDegradationinBinarizedAttention}
Since the representations (weight, activation, and embedding) with extremely compressed bit-width in fully binarized BERT have limited capabilities, the ideal binarized representation should preserve the given full-precision counterparts as much as possible, which means the mutual information between binarized and full-precision representations should be maximized.
When the deterministic $\operatorname{sign}$ function is applied to binarize BERT, the goal is equivalent to maximizing the information entropy $\mathcal{H}(\mathbf B)$ of binarized representation $\mathbf{B}$~\citep{messerschmitt1971quantizing}, which is defined as
\begin{equation}
\label{eq:information_entropy_define}
\mathcal{H}(\mathbf B)=-\sum_{B}p(B)\log p(B),
\end{equation}
where $B\in\{-1, 1\}$ is the random variable sampled from $\mathbf B$ with probability mass function $p$.
Therefore, the information entropy of binarized representation should be maximized to better preserve the full-precision counterparts and let the attention mechanism function well.
The application of \emph{zero-mean} pre-binarized weight in binarized linear layers is a representative practice that maximizes the information of binarized weight and activation (as in Section~\ref{subsec:VanillaBinarizedBERTArchitecture}).
The related discussion and proof is shown in Appendix~\ref{app:ProofofMaximizingEntropybyWeightRedistribution}.

As for the attention structure in full-precision BERT, the normalized attention weight obtained by $\operatorname{softmax}$ is essential.
But direct application of binarization function causes a complete information loss to binarized attention weight. Specifically, since the $\operatorname{softmax}(\mathbf{A})$ is regarded as following a probability distribution, the elements of $\mathbf{B}_{A}^s$ are all quantized to $1$ (Figure~\ref{fig:attention_heatmap}(b)) and the information entropy $\mathcal{H}(\mathbf{B}_{A}^s)$ degenerates to $0$.
A common measure to alleviate this information degradation is to shift the distribution of input tensors before applying the $\operatorname{sign}$ function, which is formulated as
\begin{equation}
\label{eq:shift}
\mathbf{\hat{B}_{A}}^s = \operatorname{sign}\left(\operatorname{softmax}(\mathbf{A})-\tau\right),
\end{equation}
where the shift parameter $\tau$, also regarded as the threshold of binarization, is expected to maximize the entropy of the binarized $\mathbf{\hat{B}_{A}}^s$ and is fixed during the inference.

\begin{theorem}
\label{theo:softmax_info}
Given $\mathbf A\in \mathbb{R}^{k}$ with Gaussian distribution and the variable $\mathbf{\hat{B}_{A}}^s$ generated by $\mathbf{\hat{B}_{\mathbf{w}}}^A =\operatorname{sign}(\operatorname{softmax}(\mathbf{A})-\tau)$, the threshold $\tau$, which maximizes the information entropy $\mathcal{H}(\mathbf{\hat{B}_{A}}^s)$, is negatively correlated to the number of elements $k$.
\end{theorem}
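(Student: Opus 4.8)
The plan is to reduce the entropy-maximization problem to a statement about the \emph{median} of a softmax coordinate, and then to show that this median decreases as $k$ grows. Since $\mathbf{\hat{B}_{A}}^s$ takes values in $\{-1,1\}$, its entropy $\mathcal{H}(\mathbf{\hat{B}_{A}}^s)$ is exactly the binary entropy of $p := \Pr[\operatorname{softmax}(\mathbf{A})\ge\tau]$, namely $-p\log p-(1-p)\log(1-p)$. This function is strictly concave and is uniquely maximized at $p=\tfrac12$, so the optimal threshold $\tau^{*}$ is characterized by $\Pr[\operatorname{softmax}(\mathbf{A})\ge\tau^{*}]=\tfrac12$; that is, $\tau^{*}$ is the median of a single softmax coordinate. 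Reading the Gaussian assumption as i.i.d.\ coordinates, exchangeability makes all coordinates share the same marginal law, so it suffices to analyze one representative coordinate.

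Next I would rewrite a softmax coordinate in a form that isolates the effect of $k$. Writing $\operatorname{softmax}(\mathbf{A})_i = 1/(1+S_i)$ with $S_i=\sum_{j\ne i}e^{A_j-A_i}$, the map $s\mapsto 1/(1+s)$ is strictly decreasing, so the median of the coordinate equals $1/(1+m_k)$, where $m_k$ is the median of $S_i$. The problem thus becomes: show that $m_k$ increases with $k$, equivalently that the softmax coordinate stochastically decreases with $k$.

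The heart of the argument is a coupling. For a fixed realization of $A_1,\dots,A_k$ together with an independent fresh coordinate $A_{k+1}$, appending the new term strictly enlarges the denominator, so every coordinate of the $(k{+}1)$-element softmax is \emph{pointwise} smaller than its $k$-element counterpart. This yields first-order stochastic dominance of the $k$-coordinate law over the $(k{+}1)$-coordinate law, which forces every quantile---in particular the median---to decrease. Since $\tau^{*}$ is precisely this median, $\tau^{*}$ is strictly decreasing in $k$, giving the claimed negative correlation.

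The main obstacle is that the exact distribution of a softmax coordinate under Gaussian inputs has no closed form, so one cannot simply write down $\tau^{*}$ and differentiate in $k$. The coupling/stochastic-dominance step is what sidesteps this: it delivers monotonicity of the median directly from the pointwise inequality, without ever needing the explicit density. A secondary subtlety---that softmax is applied coordinatewise and the coordinates are dependent---is harmless here, because the entropy-maximizing $\tau$ depends only on the \emph{marginal} law of a single coordinate, which symmetry pins down.
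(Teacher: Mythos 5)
Your proposal is correct and takes essentially the same route as the paper's proof: the paper likewise characterizes the entropy-maximizing $\tau$ by the probability-one-half (median) condition, converts the after-softmax threshold to a per-realization before-softmax threshold, and observes that appending a $(k{+}1)$-th element strictly enlarges the softmax denominator, forcing the threshold down. Your coupling/stochastic-dominance phrasing is a cleaner repackaging of the paper's nested-integral computation of this same pointwise monotonicity, but the decomposition and the key lemma are identical.
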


However, Theorem~\ref{theo:softmax_info} shows that it is hard to statistically determine $\tau$ to maximize the information entropy of binarized counterparts for the attention weight masked by the changeable length of attention mask.
This fact means that common measure for maximizing information (as in binarized linear layers) fails in the binarized attention structure.
The proof of Theorem~\ref{theo:softmax_info} is shown in Appendix~\ref{app:ProofoftheTheorem1}.

Moreover, the attention weight obtained by the sign function is binarized to $\{-1, 1\}$, while the original attention weight has a normalized value range $[0, 1]$.
The negative value of attention weight in the binarized architecture is contrary to the intuition of the existing attention mechanism and is also empirically proved to be harmful to the attention structure in Appendix~\ref{app:biattention}.

\begin{figure}
  \centering
  \vspace{-0.4in}
  \includegraphics[width=0.85\textwidth]{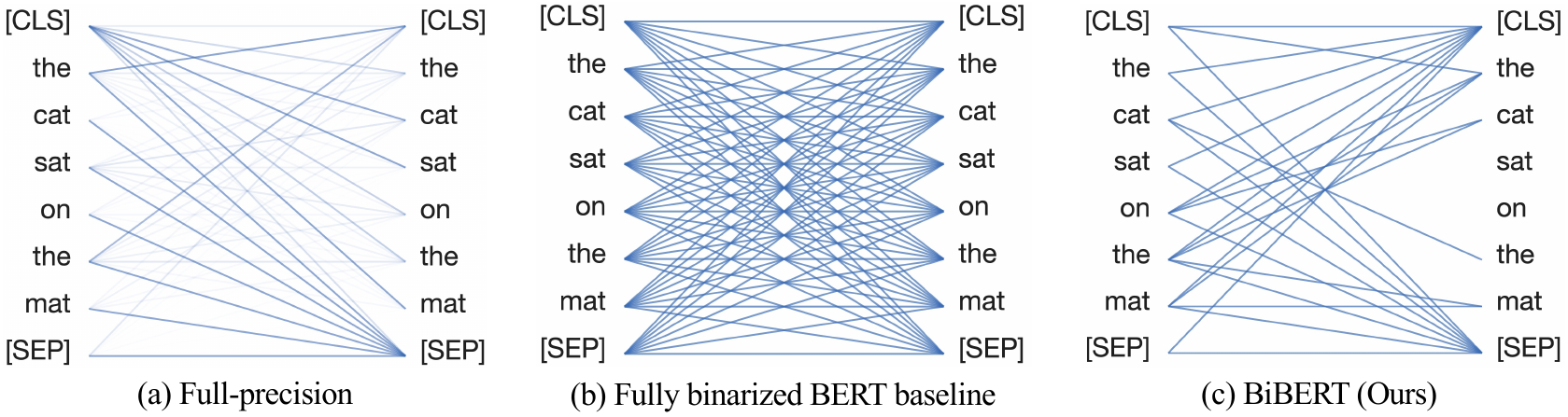}
  \vspace{-0.1in}
  \caption{Attention-head view for (a) full-precision BERT, (b) fully binarized BERT baseline, and (c) BiBERT for same input. BiBERT with Bi-Attention shows similar behavior with the full-precision model, while baseline suffers indistinguishable attention for information degradation. The visualization tools is adapted from \citep{vig2019multiscale}.}
  \label{fig:attention_heatmap}
\end{figure}

\subsubsection{Bi-Attention for Maximum Information Entropy}
\label{subsubsec:Bi-AttentionMechanismforInformationRetention}
To mitigate the information degradation caused by binarization in the attention mechanism, we introduce an efficient Bi-Attention structure for fully binarized BERT, which maximizes information entropy of binarized representations statistically and applies bitwise operations for fast inference.

We first maximize the information entropy $\mathcal{H}(\mathbf{\hat{B}}_\mathbf{A}^s)$.
The analysis in Section~\ref{subsubsec:InformationDegradationinBinarizedAttention} shows that it is hard to statistically obtain a fixed mean-shift $\tau$ for $\operatorname{softmax}(\mathbf A)$ that maximizes the entropy of the binarized parameters.
Fortunately, since both $\operatorname{softmax}$ and $\operatorname{sign}$ functions are order-preserving, there is a threshold $\phi(\tau, \mathbf A)$ to maximize Entropy $\operatorname{sign}(\mathbf A-\phi(\tau, \mathbf A))$, which is equivalent to maximizing the information entropy of $\mathbf{\hat{B}}_\mathbf{A}^s$ in Eq.~(\ref{eq:shift}) (as the Proposition~\ref{prop:equation_questions} proved in the Appendix~\ref{app:ProofoftheEquivalence}).

\begin{theorem}
\label{prop:attention_score}
When the binarized query $\mathbf{B_Q}=\operatorname{sign}(\mathbf Q)\in\{-1, 1\}^{N \times D}$ and key $\mathbf{B_K}=\operatorname{sign}(\mathbf K)\in\{-1, 1\}^{N\times D}$ are entropy maximized in binarized attention, the probability mass function of each element $\mathbf A_{ij},\ i,j\in[1,N]$ sampled from attention score $\mathbf A = \mathbf{B_Q}\otimes \mathbf{B_K}^{\top}$ can be represented as $p_A(2i-D)={0.5}^D C_D^i,\ i\in[0, D]$, which approximates the Gaussian distribution $\mathcal{N}(0, D)$.
\end{theorem}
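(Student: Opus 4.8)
The plan is to express each attention-score entry as a sum of $D$ independent sign products and then identify the resulting law as a shifted, scaled binomial, from which the Gaussian approximation follows by a local central limit theorem. Writing out the matrix product, I would start from $\mathbf{A}_{ij} = \sum_{d=1}^{D} (\mathbf{B_Q})_{id}\,(\mathbf{B_K})_{jd}$ and observe that each summand is a product of two $\{-1,1\}$-valued variables, hence itself takes values in $\{-1,1\}$. This reduces the statement about $\mathbf{A}_{ij}$ to a statement about a sum of $D$ sign-valued terms.

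Next I would invoke the entropy-maximization hypothesis. Maximizing $\mathcal{H}(\mathbf{B_Q})$ and $\mathcal{H}(\mathbf{B_K})$ forces each binary entry to take $+1$ and $-1$ with equal probability $1/2$, since this is the unique maximizer of the two-point entropy in \eqref{eq:information_entropy_define}. Assuming the entries of $\mathbf{B_Q}$ and $\mathbf{B_K}$ are mutually independent, a short case check then shows that each product $(\mathbf{B_Q})_{id}(\mathbf{B_K})_{jd}$ is again uniform on $\{-1,1\}$ (it is $+1$ exactly when the two factors agree, which happens with probability $1/2$), and that these $D$ products are independent across the feature index $d$.

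With this in hand, the number of summands equal to $+1$ follows $\mathrm{Binomial}(D,1/2)$; if exactly $i$ of the $D$ products are $+1$, the sum equals $i-(D-i)=2i-D$. Counting configurations gives $p_A(2i-D)=C_D^i\,(1/2)^D = 0.5^{D}\,C_D^i$ for $i\in[0,D]$, which is precisely the claimed probability mass function. Each summand has mean $0$ and variance $1$, so the sum has mean $0$ and variance $D$; by the De Moivre--Laplace local limit theorem the discrete law converges, under the usual continuity correction, to the density of $\mathcal{N}(0,D)$, yielding the stated Gaussian approximation.

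The main obstacle is the independence structure rather than any computation. Entropy maximization only pins down the marginal distribution of each binary entry, whereas both the binomial identity for the sum and the application of the central limit theorem require the $D$ product terms to be independent, which in turn needs the underlying sign entries to be (mutually) independent. I would therefore make this independence an explicit modeling assumption on the binarized query and key, and I would emphasize that the Gaussian statement is an approximation sharpening as $D$ grows, not an exact identity; the exact content of the theorem is the binomial mass function $p_A(2i-D)=0.5^{D}C_D^i$.
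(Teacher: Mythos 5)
Your proposal follows essentially the same route as the paper's own proof: the same decomposition of $\mathbf{A}_{ij}$ into $D$ sign products, the same identification of the sum as a shifted binomial with $p_A(2i-D)=0.5^D C_D^i$ via the entropy-maximized uniform marginals, and the same appeal to De Moivre--Laplace for the $\mathcal{N}(0,D)$ approximation. Your version is in fact slightly tighter in two respects the paper glosses over --- you compute the mean and variance directly from the i.i.d.\ summands rather than via the paper's longer binomial-coefficient identities, and you correctly make explicit the independence assumption on the entries of $\mathbf{B_Q}$ and $\mathbf{B_K}$, which the paper uses implicitly when treating the products as Bernoulli trials.
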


Theorem~\ref{prop:attention_score} shows that the distribution of $\mathbf{A}$ approximates the Gaussian distribution $\mathcal N(0, D)$ and is distributed symmetrically, we can thus trivially get that $\phi(\tau, \mathbf A)=0$. The detailed proof is given in Appendix~\ref{app:ProofoftheTheorem2}.
Therefore, simply binarizing the attention score $\mathbf{A}$ can maximize the information entropy of binarized representation.

Then, to revive the attention mechanism to capture crucial elements, here we are inspired by hard attention~\citep{xu2015show} to binarize the attention weight into the Boolean value, while our design is driven by information entropy maximization.
In Bi-Attention, we use $\operatorname{bool}$ function to binarize the attention score $\mathbf A$, which is defined as
\begin{equation}
\label{eq:quantized}
\operatorname{bool}(x)=
\begin{cases}
1,& \text{ if } x \ge 0\\
0,& \text{ otherwise }
\end{cases}, \qquad
\frac{\partial\operatorname{bool}(x)}{\partial x}=
\begin{cases}
1,& \text{ if } |x| \leq 1\\
0,& \text{ otherwise. }
\end{cases}
\end{equation}
By applying $\operatorname{bool}(\cdot)$ function, the elements in attention weight with lower value are binarized to $0$, and thus the obtained entropy-maximized attention weight can filter the crucial part of elements.
And the proposed Bi-Attention structure is finally expressed as
\begin{equation}
\mathbf {B_A}=\operatorname{bool}\left(\mathbf{A} \right)=\operatorname{bool}\left(\frac{1}{\sqrt{D}}\left(\mathbf{B_Q}\otimes \mathbf{B_K}^{\top}\right)\right),
\end{equation}
\begin{equation}
\operatorname{Bi-Attention}(\mathbf{B_Q}, \mathbf{B_K}, \mathbf{B_V})= \mathbf {B_A} \boxtimes \mathbf{B_V},
\end{equation}
where $\mathbf{B_V}$ is the binarized value obtained by $\operatorname{sign}(\mathbf V)$, $\mathbf{B_A}$ is the binarized attention weight, and $\boxtimes$ is a well-designed Bitwise-Affine Matrix Multiplication (BAMM) operator composed by $\otimes$ and $\operatorname{bitshift}$ to align training and inference representations and perform efficient bitwise calculation.
The detailed of BAMM is illustrated as Figure~\ref{fig:OperationinBi-Attention} in Appendix~\ref{app:OperationinBi-Attention}.

In a nutshell, in our Bi-Attention structure, the information entropy of binarized attention weight is maximized (as Figure~\ref{fig:attention_heatmap}(c) shows) to alleviate its immense information degradation and revive the attention mechanism. Bi-Attention also achieves greater efficiency since the $\operatorname{softmax}$ is excluded.

\subsection{Direction-Matching Distillation}

To address the direction mismatch occurred in fully binarized BERT baseline in the backward propagation, we further propose a Direction-Matching Distillation (DMD) scheme with apposite distilled activations and the well-constructed similarity matrices to effectively utilize knowledge from the teacher, which optimizes the fully binarized BERT more accurately.

\subsubsection{Direction Mismatch}
\label{subsubsec:WherearetheShoesScratching}

As an optimization technique based on element-level comparison of activation, distillation allows the binarized BERT to mimic the full-precision teacher model about intermediate activation.
However, we find that the distillation causes direction mismatch for optimization in the fully binarized BERT baseline (Section~\ref{subsec:ObservationforBottleneckofAccuracyBinarizedBERT}), leading to insufficient optimization and even harmful effects.

Eq.~(\ref{eq:vanilla_distill_form}) shows that, the distillation for attention score $\mathbf{A}$ in a specific layer can be expressed as $\operatorname{MSE}(\mathbf{A}, \mathbf{A}_T)$, where $\mathbf{A}$ and $\mathbf{A}_T$ are attention scores in binarized student BERT and full-precision teacher BERT, respectively. Since the attention score in fully binarized BERT is obtained by multiplying binarized query $\mathbf{B_Q}$ and key $\mathbf{B_K}$, the loss $\ell_\textrm{att}$ can be expressed as:
\begin{equation}
\ell_\textrm{att}=\operatorname{MSE}\left(\frac{1}{\sqrt{D}} \mathbf{B_Q}\otimes \mathbf{B_K}^{\top}, \frac{1}{\sqrt{D}} \mathbf{Q}_T\times \mathbf{K}_T^{\top}\right).
\end{equation}

\begin{wrapfigure}[22]{r}{0.28\textwidth}   
\vspace{-0.45in}   
\begin{center}     
\includegraphics[width=0.28\textwidth]{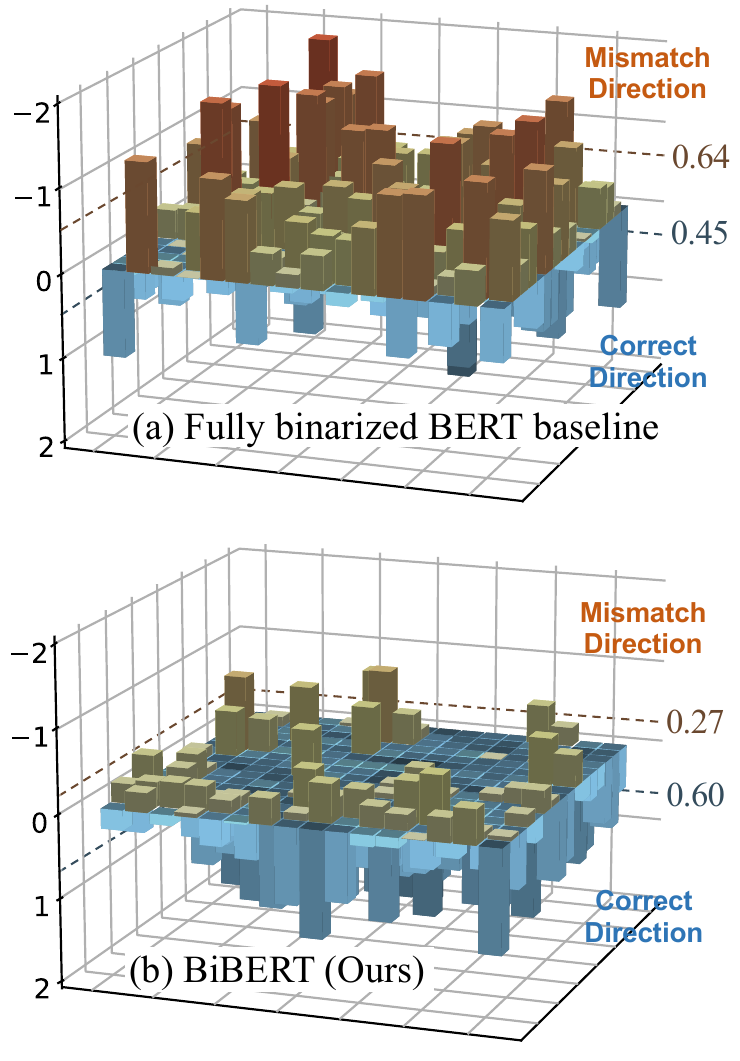}
\end{center}   
\vspace{-0.25in}
\caption{Visualization of the direction mismatch of one head throughout training. The full binarized BERT baseline distills the attention score which has severe direction mismatch, while the knowledge (take queries as example in (b)) used in our BiBERT significantly alleviates mismatch.}
\label{fig:error_visual} 
\end{wrapfigure} 

\begin{theorem}
\label{theo:binary_representation}
Given the variables $X$ and ${X}_T$ follow $\mathcal{N}(0, \sigma_1),\mathcal{N}(0, \sigma_2)$ respectively, the proportion of optimization direction error is defined as $p_{\text{error }Q\text{-bit}}=p(\operatorname{sign}(X-{X}_T) \neq \operatorname{sign}(\operatorname{quantize}_Q(X)-{X}_T))$, where $\operatorname{quantize}_Q$ denotes the $Q$-bit symmetric quantization. As $Q$ reduces from $8$ to $1$, $p_{\text{error }Q\text{-bit}}$ becomes larger.
\end{theorem}

Theorem~\ref{theo:binary_representation} shows that the distillation for binarized activation based on the numerical comparison with full-precision activation causes the most severe optimization direction error among all bit-widths.
For example, for activation following the assumption of standard normal distribution, the error probability caused by 1-bit binarization is approximately $4.4\times$ that of 4-bit quantization (14.4\% vs. 3.3\%).
The sudden increase in the probability of optimization direction mismatch makes it a critical issue in the fully binarized BERT while neglected in the previous BERT quantization studies.
The proof and discussion of Theorem~\ref{theo:binary_representation} is found in Appendix~\ref{app:ProofoftheTheorem3}.

Since the attention score is obtained by direct multiplication of two binarized activations (binarized query $\mathbf{B_Q}$ and key $\mathbf{B_Q}$) , its distillation is also misled by the direction mismatch.
In Figure~\ref{fig:error_visual}(a), the direction mismatch in the distillation of attention score is common and severe, even causes up to higher degree optimization on the mismatch direction (mismatch 0.64 vs. match 0.45 on average).
The phenomenon in Figure~\ref{fig:error_visual} reveals that the element-wise gradient direction error for binarized activation found in Theorem~\ref{theo:binary_representation} accumulates in the distillation of the attention score and eventually causes a significant mismatch between the direction of overall optimization and that of the distillation loss.
Hence, the direction mismatch hinders the accurate optimization of the fully binarized BERT.

Second, the activation scales in the fully binarized student and full-precision teacher BERT are significantly different, \textit{e.g.}, $\mathbf{A}$ in fully binarized BERT approximately follows the fixed distribution $\mathcal{N}(0, D)$ while that of $\mathbf{A}_T$ in full-precision teacher is flexible.
Applying the discrete binarization function also makes numerical changes of activation more significant.
These problems for activation also hinder effective distillation for the fully binarized BERT.

\subsubsection{Direction-Matching Distillation for Accurate Optimization}
\label{subsubsec:DMDforMatchingDistilledInformation}
To solve the optimization direction mismatch in the distillation of the BERT full binarization, we propose the Direction-Matching Distillation (DMD) method in BiBERT.

We first reselect the distilled activations for DMD.
As discussed in Section~\ref{subsubsec:WherearetheShoesScratching}, severe direction mismatch is mainly caused by the distillation of the direct binding product of two binarized activations.
Thus, we distill the upstream query $\mathbf Q$ and key $\mathbf K$ instead of attention score in our DMD for distillation to utilize its knowledge while alleviating direction mismatch.
Moreover, inspired by an observation in Section~\ref{subsec:ObservationforBottleneckofAccuracyBinarizedBERT} that the distillation of MHA output is of great help for improving performance, we also distill the value $\mathbf V$ to further cover all the inputs of MHA.

Then, we construct similarity pattern matrices for distilling activation, which can be expressed as
\begin{equation}
    \mathbf{P_Q} = \frac{\mathbf{Q}\times\mathbf{Q}^{\top}}{\|\mathbf{Q}\times\mathbf{Q}^{\top}\|}, \qquad
    \mathbf{P_K} = \frac{\mathbf{K}\times\mathbf{K}^{\top}}{\|\mathbf{K}\times\mathbf{K}^{\top}\|}, \qquad
    \mathbf{P_V} = \frac{\mathbf{V}\times\mathbf{V}^{\top}}{\|\mathbf{V}\times\mathbf{V}^{\top}\|},
   \label{eq:qxq_t}
\end{equation}
where $\|\cdot\|$ denotes $\ell2$ normalization.
Previous work shows that matrices constructed in this way are regarded as the specific patterns reflecting the semantic comprehension of network~\citep{tung2019similarity,Martinez2020Training}.
We further find that matrices are also scale-normalized and stable numerically since they focus more on endogenous relative relationships and thus are suitable for distillation between binarized and full-precision networks.
The corresponding $\mathbf{P_Q}_T, \mathbf{P_K}_T, \mathbf{P_V}_T$ are constructed in the same way by the teacher's activation. 
The distillation loss is expressed as:
\begin{equation}
\ell_\text{distill}=\ell_\text{DMD}+\ell_\text{hid}+\ell_\text{pred}, \qquad
\ell_\text{DMD}=\sum_{l\in[1, L]}\sum\limits_{\mathbf{F}\in\mathcal{F}_\text{DMD}}\|\mathbf{P_F}_l - \mathbf{P_F}_{Tl}\|,
\end{equation}
where $L$ denotes the number of transformer layers, $\mathcal{F}_\text{DMD}=\{\mathbf{Q}, \mathbf{K}, \mathbf{V}\}$. The loss term $\ell_\text{hid}$ is constructed as the $\ell2$ normalization form, and $\ell_\text{pred}$ is still constructed as in Eq.~(\ref{eq:vanilla_distll_loss}).

Our DMD scheme first provides the matching optimization direction (Figure~\ref{fig:error_visual}(b)) by reselecting appropriate distilled parameters and then constructs similarity matrices to eliminate scale differences and numerical instability, thereby improves fully binarized BERT by accurate optimization.

\section{Experiments}

In this section, we conduct extensive experiments to validate the effectiveness of our proposed BiBERT for efficient learning on the multiple architectures and the GLUE~\citep{GLUEbenchmark} benchmark with diverse NLP tasks. 
We first conduct an ablation study on four tasks (SST-2, MRPC, RTE, and QQP) for BiBERT on BERT$_\text{BASE}$ (12 hidden layers) architecture~\citep{devlin2019bert} to showcase the benefits of the Bi-Attention structure and DMD scheme separately.
Then we compare BiBERT with the state-of-the-art (SOTA) quantized BERTs in terms of accuracy on BERT$_\text{BASE}$. 
Our designs stand out among fully binarized BERTs and even outperform some quantized models with more bit-width parameters.
We also evaluate our BiBERT on TinyBERT$_\text{6L}$ (6 hidden layers) and TinyBERT$_\text{4L}$ (4 hidden layers)~\citep{TinyBERT}, and BiBERT on these compact architectures even outperforms existing methods on BERT$_\text{BASE}$.
In terms of efficiency, our BiBERT achieves an impressive $56.3\times$ and $31.2\times$ saving on FLOPs and model size. 
The detailed experimental setup and implementations are given in Appendix~\ref{app:ExperimentalSetup}.\blfootnote{“\#Bits” (W-E-A) is the bit number for weights, word embedding, and activations. ``DA" is short for data augmentation. “Avg.” denotes the average results.}

\subsection{Ablation Study}

\begin{wraptable}[8]{r}{0.35\textwidth}
\renewcommand\arraystretch{.5}
\vspace{-0.45in}
\centering
\captionsetup{width=1\linewidth}
\caption{Ablation study.}
\label{tab:ablation}
\vspace{-0.16in}
\setlength{\tabcolsep}{0.1mm}
{\chuhao
\begin{tabular}{lcccccc}
\toprule
\textbf{Quant} & \begin{tabular}[c]{@{}c@{}}\textbf{\#Bits}\end{tabular} & \textbf{DA} & \textbf{SST-2} & \textbf{MRPC} & \textbf{RTE} & \textbf{QQP} \\ \midrule
Full Precision & 32-32-32 & -- & 93.2 & 86.3 & 72.2 & 91.4 \\ \midrule
Baseline & 1-1-1 & {\chuhao\XSolidBrush} & 77.6 & 70.2 & 54.1 & 73.2  \\
Bi-Attention & 1-1-1 & {\chuhao\XSolidBrush} & 82.1 & 70.5 & 55.6 & 74.9 \\
DMD & 1-1-1 & {\chuhao\XSolidBrush} & 79.9 & 70.5 & 55.2 & 75.3\\
BiBERT (ours) & 1-1-1 & {\chuhao\XSolidBrush} & \textbf{88.7} & \textbf{72.5} & \textbf{57.4} & \textbf{84.8} \\
\midrule
Baseline & 1-1-1 & {\chuhao\Checkmark} & 84.0 & 71.4 & 50.9 & - \\
Bi-Attention & 1-1-1 & {\chuhao\Checkmark} & 85.6 & 73.2 & 53.1 & - \\
DMD & 1-1-1 & {\chuhao\Checkmark} & 85.3 & 72.5 & 56.3 & - \\
BiBERT (ours) & 1-1-1 & {\chuhao\Checkmark} & \textbf{90.9} & \textbf{78.8} & \textbf{61.0} & - \\
\bottomrule
\end{tabular}
}
\vspace{0.1in}
\end{wraptable}

As shown in Table~\ref{tab:ablation}, the fully binarized BERT baseline suffers a severe performance drop on SST-2, MRPC, RTE and QQP tasks. Bi-Attention and DMD can improve the performance when used alone, and the two techniques further boost the performance considerably when combined together. 
To conclude, the two techniques can promote each other to improve BiBERT and close the performance gap between fully binarized BERT and full-precision counterpart. 


\subsection{Comparison with SOTA Methods}

We compare our BiBERT with the SOTA BERT quantization methods under ultra-low bit-width in terms of accuracy and efficiency, to fully demonstrate the advantages of our design. 

\textbf{Accuracy Performance.} 
In Table~\ref{tab:glue} and Table~\ref{tab:glue-da}, we show experiments on the BERT$_\text{BASE}$ (as default), TinyBERT$_\text{6L}$, and TinyBERT$_\text{4L}$ architectures and the GLUE benchmark with or without data augmentation. Since the MNLI and QQP have large data volume and it brings little benefits for the performance to apply data augmentation, we do not apply augmentation for these two tasks, thus they are excluded in Table~\ref{tab:glue-da}. 
Results show that BiBERT outperforms other methods on the development set of GLUE benchmark, including TernaryBERT, BinaryBERT, Q-BERT, and Q2BERT.

Table~\ref{tab:glue} shows the results on the GLUE benchmark without data augmentation.
Our BiBERT surpasses existing methods on BERT$_\text{BASE}$ architecture by a wide margin in the average accuracy, and first achieves convergence under ultra-low bit activation on some tasks, such as CoLA, MRPC, and RTE. 
While under ultra-low bit activation, the accuracy of TernaryBERT and BinaryBERT decreases severely which also does not improve under the 2-2-2 setting (about $4\times$ FLOPs and $2\times$ storage usage increase). On some specific tasks like MRPC, higher bit-widths and consumption did not even make TernaryBERT and BinaryBERT converge.
In addition, BiBERT surpasses the fully binarized baseline$_{50\%}$ and BinaryBERT$_{50\%}$ (maximizing the information entropy by the 50\% quantile threshold) and other strengthened baselines, we present detailed discussion in Appendix~\ref{app:ComparisonofEnhancedBaselines}.
With data augmentation, BiBERT achieves comparable performances with full-precision BERT on several tasks, \textit{e.g.}, 90.9\% accuracy (drop only 2.2\%) on SST-2 (Table~\ref{tab:glue-da}).
These results indicate that BiBERT makes full use of the limited representation capabilities by the well-designed structure and training scheme.
We noticed that BiBERT lags behind Q-BERT (2-8-8) and TernaryBERT (2-2-2) on MNLI and STS-B (with DA) while surpassing them on other tasks, indicating the fully binarized BERT may have greater improvement potential in these tasks.
BiBERT is also evaluated on compact TinyBERT$_\text{6L}$ and TinyBERT$_\text{4L}$ architectures, with about $2.0\times$ and $18.8\times$ fewer FLOPs, respectively.
The results show that BiBERT on these compact architectures still outperforms existing quantization methods on BERT$_\text{BASE}$, such as TernaryBERT and BinaryBERT. It forcefully demonstrates that the targeted designs of BiBERT enable fully binarized BERTs to run on various architectures.

\textbf{Efficiency Performance.}
As shown in Table~\ref{tab:glue} and Table~\ref{tab:glue-da} above, our BiBERT achieves an impressive $56.3\times$ FLOPs and $31.2\times$ model size saving over the full-precision BERT.
Furthermore, benefiting from simple yet effective Bi-Attention structure which casts the expensive $\operatorname{softmax}$ operation into a well-engineered bit operation BAMM, our BiBERT surpasses other quantized BERTs in computation and storage saving while enjoying the best accuracy.


\begin{table}[t]
\renewcommand\arraystretch{0.6}
\centering
\vspace{-0.53in}
\caption{Comparison of BERT quantization methods without data augmentation.}
\label{tab:glue}
\vspace{-0.15in}
\setlength{\tabcolsep}{1.4mm}
{\chuhao
\begin{tabular}{lcccccccccccc}
\hline
\vspace{0.25pt}
\textbf{Quant} & \begin{tabular}[c]{@{}c@{}}\textbf{\#Bits}\end{tabular} & \begin{tabular}[c]{@{}c@{}}\textbf{Size}$_\text{ (MB)}$\end{tabular} & \begin{tabular}[c]{@{}c@{}}\textbf{FLOPs}$_\text{ (G)}$\end{tabular} & \begin{tabular}[c]{@{}c@{}}\textbf{MNLI}$_\text{-m/mm}$\end{tabular} & \textbf{QQP} & \textbf{QNLI} & \textbf{SST-2} & \textbf{CoLA} & \textbf{STS-B} & \textbf{MRPC} & \textbf{RTE} & \textbf{Avg.} \\ 
\hline
Full Precision & 32-32-32 & 418 & 22.5 & 84.9/85.5 & 91.4 & 92.1 & 93.2 & 59.7 & 90.1 & 86.3 & 72.2  & 83.9 \\ 
\hdashline[0.8pt/1pt]
Q-BERT & 2-8-8 & 43.0 & 6.5 & 76.6/77.0 & -- & -- & 84.6 & -- & -- & 68.3 & 52.7 & -- \\
Q2BERT & 2-8-8 & 43.0 & 6.5 & 47.2/47.3 & 67.0 & 61.3 & 80.6 & 0 & 4.4 & 68.4 & 52.7 &  47.7 \\
{TernaryBERT} & 2-2-8 & 28.0 & 6.4 & 83.3/83.3 & 90.1 & -- & -- & 50.7 & -- & 87.5 & 68.2 & -- \\
{BinaryBERT} & 1-1-4 & 16.5 & 1.5 & 83.9/84.2 & 91.2 & 90.9 & 92.3 & 44.4 & 87.2 & 83.3 & 65.3 & 79.9 \\
{TernaryBERT} & 2-2-2 & 28.0 & 1.5 & 40.3/40.0 & 63.1 & 50.0 & 80.7 & 0 & 12.4 & 68.3 & 54.5 &  45.5 \\
{BinaryBERT} & 1-1-2 & 16.5 & 0.8 & 62.7/63.9 & 79.9 & 52.6 & 82.5 & 14.6 & 6.5 & 68.3 & 52.7 &  53.7 \\
{TernaryBERT} & 2-2-1 & 28.0 & 0.8 & 32.7/33.0 & 74.1 & 59.3 & 53.1 & 0 & 7.1 & 68.3 & 53.4 & 42.3 \\
\hdashline[0.8pt/1pt]
Baseline & 1-1-1 & 13.4 & 0.4 & 45.8/47.0 & 73.2 & 66.4 & 77.6 & 11.7 & 7.6 & 70.2 & 54.1 & 50.4 \\
Baseline$_{50\%}$ & 1-1-1 & 13.4 & 0.4 & 47.7/49.1 & 74.1 & 67.9 & 80.0 & 14.0 & 11.5 & 69.8 & 54.5 & 52.1 \\
{BinaryBERT} & 1-1-1 & 16.5 & 0.4 & 35.6/35.3 & 66.2 & 51.5 & 53.2 & 0 & 6.1 & 68.3 & 52.7 & 41.0 \\
BinaryBERT$_{50\%}$ & 1-1-1 & 13.4 & 0.4 & 39.2/40.0 & 66.7 & 59.5 & 54.1 & 4.3 & 6.8 & 68.3 & 53.4 & 43.5 \\
BiBERT (ours) & \textbf{1-1-1} & \textbf{13.4} & \textbf{0.4} & \textbf{66.1}/\textbf{67.5} & \textbf{84.8} & \textbf{72.6} & \textbf{88.7} & \textbf{25.4} & \textbf{33.6} & \textbf{72.5} & \textbf{57.4} & \textbf{63.2}\\
\hline
Full Precision $_{\text{6L}}$ & 32-32-32 & 257 & 11.3 & 84.6/83.2 & 71.6 & 90.4 & 93.1 & 51.1 & 83.7 & 87.3 & 70.0 & 79.4 \\ 
\hdashline[0.8pt/1pt]
BiBERT$_{\text{6L}}$ (ours) & \textbf{1-1-1} & \textbf{6.8} & \textbf{0.2} & \textbf{63.6}/\textbf{63.7} & \textbf{83.3} & \textbf{73.6} & \textbf{87.9} & \textbf{24.8} & \textbf{33.7} & \textbf{72.2} & \textbf{55.9} & \textbf{62.1}\\
\hline
Full Precision $_{\text{4L}}$ & 32-32-32 & 55.6 & 1.2 & 82.5/81.8 & 71.3 & 87.7 & 92.6 & 44.1 & 80.4 & 86.4 & 66.6 & 77.0 \\ 
\hdashline[0.8pt/1pt]
BiBERT$_{\text{4L}}$ (ours) & \textbf{1-1-1} & \textbf{4.4} & \textbf{0.03} & \textbf{55.3}/\textbf{56.1} & \textbf{78.2} & \textbf{71.2} & \textbf{85.4} & \textbf{14.9} & \textbf{31.5} & \textbf{72.2} & \textbf{54.2} & \textbf{57.7}\\
\hline
\end{tabular}}
\end{table}

\begin{table}[t]
\renewcommand\arraystretch{0.6}
\centering
\vspace{-0.17in}
\caption{Comparison of BERT quantization methods with data augmentation.}
\label{tab:glue-da}
\vspace{-0.15in}
\setlength{\tabcolsep}{2.42mm}
{\chuhao
\begin{tabular}{lcccccccccc}
\hline
\vspace{0.25pt}
\textbf{Quant} & \begin{tabular}[c]{@{}c@{}}\textbf{\#Bits}\end{tabular} & \begin{tabular}[c]{@{}c@{}}\textbf{Size}$_\text{ (MB)}$\end{tabular} & \begin{tabular}[c]{@{}c@{}}\textbf{FLOPs}$_\text{ (G)}$\end{tabular} & \textbf{QNLI} & \textbf{SST-2} & \textbf{CoLA} & \textbf{STS-B} & \textbf{MRPC} & \textbf{RTE} & \textbf{Avg.} \\ 
\hline
Full Precision & 32-32-32 & 418 & 22.5 & 92.1 & 93.2 & 59.7 & 90.1 & 86.3 & 72.2  & 82.3 \\
\hdashline[0.8pt/1pt]
{TernaryBERT} & 2-2-8 &  28.0 & 6.4 & 90.0 & 92.9 & 47.8 & 84.3 & 82.6 & 68.4 & 77.8\\
{BinaryBERT} & 1-1-4 & 16.5 & 1.5 & 91.4 & 93.7 & 53.3 & 88.6 & 86.0 & 71.5 & 80.8\\
{TernaryBERT} & 2-2-2 & 28.0 & 1.5  & 50.0 & 87.5 & 20.6 & 72.5 & 72.0 & 47.2 & 58.3 \\
{BinaryBERT} & 1-1-2 & 16.5 & {0.8}  & 51.0 & 89.6 & 33.0 & 11.4 & 71.0 & 55.9 & 52.0 \\
{TernaryBERT} & 2-2-1 & 28.0 & 0.8  & 50.9 & 80.3 & 6.5 & 10.3 & 71.5 & 53.4 & 45.5 \\
\hdashline[0.8pt/1pt]
Baseline & 1-1-1 & 13.4 & {0.4}  & 69.2 & 84.0 & 23.3 & 14.4 & 71.4 & 50.9 & 52.2 \\
{BinaryBERT} & 1-1-1 & 16.5 & {0.4}  & 66.1 & 78.3 & 7.3 & 22.1 & 69.3 & 57.7 & 50.1 \\
BiBERT (ours) & \textbf{1-1-1} & \textbf{13.4} & \textbf{0.4} & \textbf{76.0} & \textbf{90.9} & \textbf{37.8} & \textbf{56.7} & \textbf{78.8} & \textbf{61.0} & \textbf{67.0} \\
\hline
Full Precision $_{\text{6L}}$ & 32-32-32 & 257 & 11.3 & 90.4 & 93.1 & 51.1 & 83.7 & 87.3 & 70.0 & 79.2 \\
\hdashline[0.8pt/1pt]
BiBERT$_{\text{6L}}$ (ours) & \textbf{1-1-1} & \textbf{6.8} & \textbf{0.2} & \textbf{76.0} & \textbf{90.7} & \textbf{35.6} & \textbf{62.7} & \textbf{77.9} & \textbf{57.4} & \textbf{66.7} \\
\hline
Full Precision $_{\text{4L}}$ & 32-32-32 & 55.6 & 1.2 & 87.7 & 92.6 & 44.1 & 80.4 & 86.4 & 66.6 & 76.2 \\
\hdashline[0.8pt/1pt]
BiBERT$_{\text{4L}}$ (ours) & \textbf{1-1-1} & \textbf{4.4} & \textbf{0.03} & \textbf{73.2} & \textbf{88.3} & \textbf{20.0} & \textbf{42.5} & \textbf{74.0} & \textbf{56.7} & \textbf{59.1} \\
\hline
\end{tabular}}
\end{table}

\subsection{More Analysis}



\begin{wrapfigure}[12]{r}{0.28\textwidth}
\vspace{-0.6in}
\begin{center} 
\includegraphics[width=0.28\textwidth]{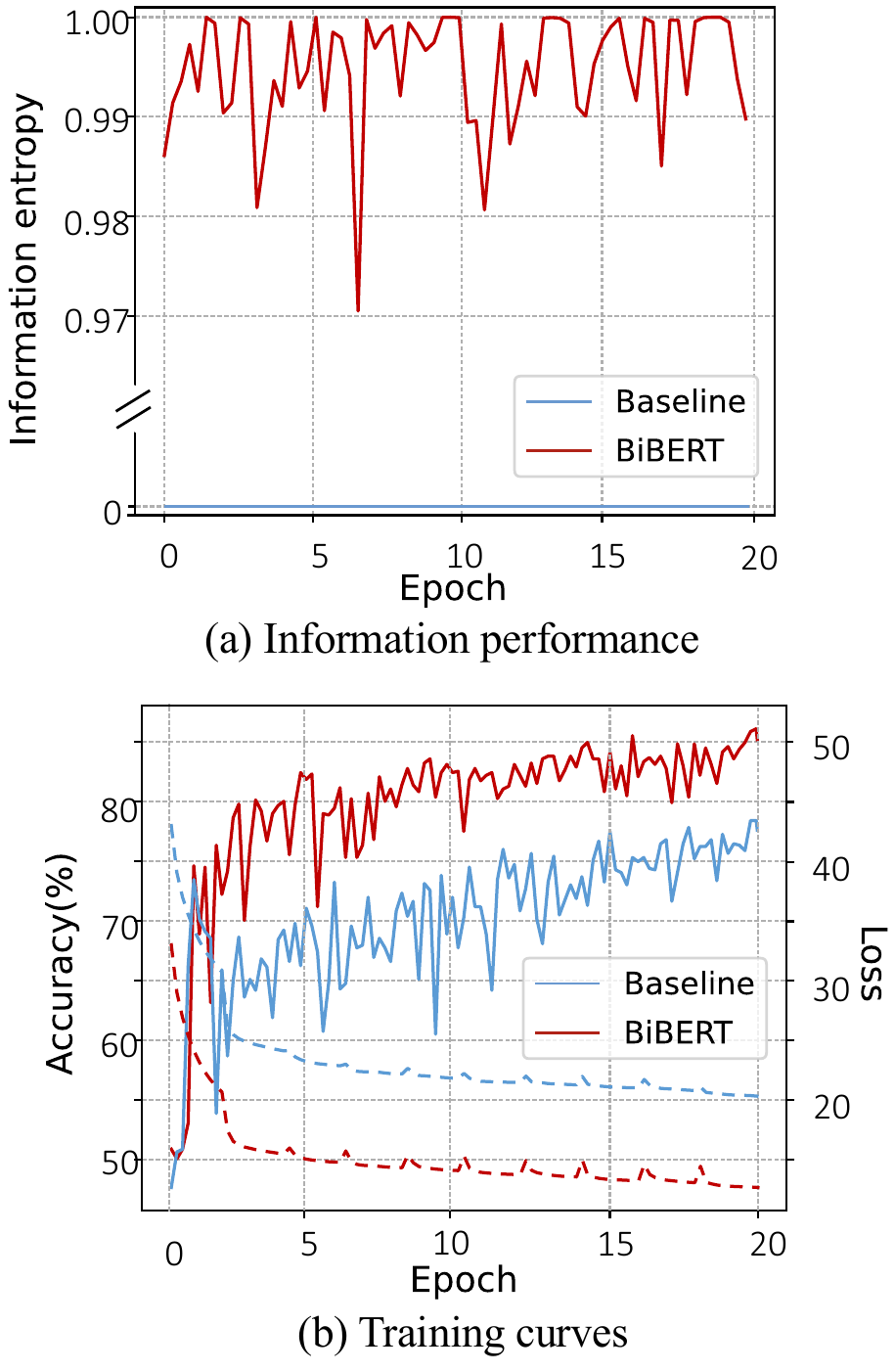}
\end{center}
\vspace{-0.2in}
\caption{Analysis.}
\label{fig:analysis} 
\end{wrapfigure} 
\textbf{Information Performance.} 
To show the improvement of information performance by applying Bi-Attention, we compare the information entropy of binarized representations for baseline and BiBERT.
As shown in Figure~\ref{fig:analysis}(a), we take the first heads in layer 0 of each model, and the same phenomenon exists in all heads and layers. 
During the training process, the information entropy of attention weight in BiBERT fluctuates in a small range and is almost maximized, however, that of baseline is completely degraded to $0$. 


\textbf{Training Curves.}
We plot training loss curves of fully binarized BERT baseline and BiBERT on SST-2 without data augmentation in Figure~\ref{fig:analysis}(b). 
Compared with the baseline model, our method has a faster convergence rate and achieves higher accuracy, suggesting ours advantages in terms of accurate optimization.

\section{Conclusion}
We propose BiBERT towards the accurate fully binarized BERT. 
We first reveal the bottlenecks of the fully binarized BERT baseline and build a theoretical foundation for the impact of full binarization.
Then we propose Bi-Attention and DMD in BiBERT to improve performance.
BiBERT outperforms existing SOTA BERT quantization methods with ultra-low bit activation, giving an impressive $56.3\times$ FLOPs and $31.2\times$ model size saving.
Our work gives an insightful analysis and effective solution about the crucial issues in BERT full binarization, which blazes a promising path for the extreme compression of BERT. We hope our work can provide directions for future research.

\noindent\textbf{Acknowledgement} This work was supported in part by National Key Research and Development Plan of China under Grant 2020AAA0103503, National Natural Science Foundation of China under Grant 62022009 and Grant 61872021, Beijing Nova Program of Science and Technology under Grant Z191100001119050, Outstanding Research Project of Shen Yuan Honors College, BUAA Grant 230121206, and CCF-Baidu Open Fund OF2021003;
it was also supported under the RIE2020 Industry Alignment Fund-Industry Collaboration Projects (IAF-ICP) Funding Initiative. 

\clearpage
{
\bibliography{iclr2022_conference}
\bibliographystyle{iclr2022_conference}
}

\appendix
\clearpage
{\LARGE\sc {Appendix for BiBERT}\par}

\section{Main Proofs and Discussion}

\subsection{Discussion of Binarized Linear Layer}
\label{app:ProofofMaximizingEntropybyWeightRedistribution}

\begin{proposition1}
\label{prop:linear_entropy}
When a random variable $X$ follows a \emph{zero-mean} Gaussian distribution, the information entropy $\mathcal{H}(B)$ is maximized, where $B_X=\operatorname{sign}(X)$.
\end{proposition1}

Since the value of $\mathbf{B_w}$ depends on the sign of ${\mathbf W}$ and the distribution of $\mathbf{W}$ is almost symmetric~\citep{Simultaneously-Optimizing-Weight,ACIQ}, 
the balanced operation can maximize the information entropy of binarized $\mathbf{B_w}$ on the whole.
The balanced binarized weights with the zero-mean attribute can be obtained by subtracting the mean of full-precision weights.
As Proposition~\ref{prop:linear_entropy} shown, under the binomial distribution assumption and symmetric assumption of $\mathbf W$, when the binarized weight is balanced, the information entropy of $\mathbf{B_w}$ takes the maximum value, which means the binarized values should be evenly distributed. 

Moreover, when the weight is zero-mean, the entropy of output $\mathbf Z$ (seen as the activation in the next binarized linear layer) in the network can also be maximized. 
Supposing quantized activations $\mathbf{B_x}$ have mean $\mathbb{E}[\mathbf{B_x}] = \mu\mathbf 1$, the mean of $\mathbf Z$ can be calculated by

\begin{equation}
\mathbb{E}[\mathbf Z] = {\mathbf{B_w}}\otimes\mathbb{E}[\mathbf{B_x}] = {\mathbf{B_w}}\otimes\mu\mathbf 1.
\end{equation}

Since the zero-mean weight is applied in each layer, we have ${\mathbf{Q_w}}\otimes\mathbf 1 = 0$, and the mean of output is zero. Therefore, the information entropy of activations in each layer can be maximized according to Proposition~\ref{prop:linear_entropy}.
Therefore, a simple redistribution for the full-precision counterpart of binarized weights can simultaneously maximize the information entropy of binarized weights and activations.

The proof of Proposition~\ref{prop:linear_entropy} is presented as below:

\begin{proof}
According to the definition of information entropy, we have

\begin{align}
    \mathcal{H}(B_X)=&-\sum\limits_{b\in\mathcal{B_X}}p_{B_X}(b)\log p_{B_X}(b)\\
    =&-p_{B_X}(-1) \log p_{B_X}(-1)-p_{B_X}(1) \log p_{B_X}(1) \\
    =&-p_{B_X}(-1) \log p_{B_X}(-1)-\left(1-p_{B_X}(-1) \log \left(1-p_{B_X}(-1)\right)\right).
\end{align}

Then we can get the derivative of $\mathcal{H}(B_X)$ with respect to $p_{B_X}(-1)$

\begin{align}
    \frac{d\ \mathcal{H}(B_X)}{d\ p_{B_X}(-1)}
    =&-\Big(\log p_{B_X}(-1)+\frac{p_{B_X}(-1)}{p_{B_X}(-1)\ln2} \Big)+\Big(\log \left(1-p_{B_X}(-1)\right)+\frac{1-p_{B_X}(-1)}{(1-p_{B_X}(-1))\ln2}\Big)\\
    =&-\log p_{B_X}(-1) +\log \left(1-p_{B_X}(-1)\right)-\frac{1}{\ln2}+\frac{1}{\ln2}\\
    =&\log\Big(\frac{1-p_B(-1)}{p_B(-1)}\Big).
\end{align}

When we let $\frac{d\ \mathcal{H}(B_X)}{d\ p_{B_X}(-1)}=0$ to maximize the $\mathcal{H}(B_X)$, we have $p_{B_X}(-1)=0.5$. Since the deterministic $\operatorname{sign}$ function with the $zero$ threshold is applied as the quantizer, the probability mass function of $B_X$ is represented as

\begin{equation}
    p_{B_X}(b)= \begin{cases}
		\int_{-\infty}^{0} \ f_{Y}(y)\ d y, & \mathrm {if}\ b=-1\\
		\int^{\infty}_{0} \ f_{Y}(y)\ d y, & \mathrm {if}\ b=1,
		\end{cases}\\
\end{equation}

where $f_{X}(x)$ is the probability density function of variable $X$.
Since $X\sim\mathcal{N}(0, \sigma)$, the $f_{X}(x)$ is defined as 

\begin{equation}
f_X(x)={\frac {1}{\sigma {\sqrt {2\pi }}}} e^{-{\frac {x^{2}}{2\sigma ^{2}}}}.
\end{equation}

When the information entropy $\mathcal{H}(B_X)$ is maximized, we have

\begin{equation}
    \int_{-\infty}^{0} \ f_{X}(x)\ d x=0.5.
\end{equation}
\end{proof}

\subsection{Proof of the Theorem 1}
\label{app:ProofoftheTheorem1}

\begin{theorem1}
Given $\mathbf A\in \mathbb{R}^{k}$ with Gaussian distribution and the variable $\mathbf{\hat{B}_{A}}^s$ generated by $\mathbf{\hat{B}_{\mathbf{w}}}^A =\operatorname{sign}(\operatorname{softmax}(\mathbf{A})-\tau)$, the threshold $\tau$, which maximizes the information entropy $\mathcal{H}(\mathbf{\hat{B}_{A}}^s)$, is negatively correlated to the number of elements $k$.
\end{theorem1}

\begin{proof}

Given the $\mathbf A = \{A_1, A_2, ..., A_k\}\in\mathbb{R}^k$, each $A_i$ obeys Gaussian distribution $\mathcal{N}(\mu, \sigma)$.
Without loss of generality, we consider the threshold of the first variable $A_1$, which can maximize the information entropy. Such threshold $\tau_K$ satisfies

\begin{equation}
\label{eq:pro_th1_1}
\underbrace{\int^{\infty}_{-\infty} ... \int^{\infty}_{-\infty}}_{K-1} \int^{\infty}_{-\infty}\prod \limits_{i=1}^K p_{A_i}(a_i)[\operatorname{softmax}_K(a_1)\leq\tau_K]\ da_i = 0.5,
\end{equation}

where $[\cdot]$ denotes the $Iverson\ bracket$ that is defined as 

\begin{equation}
\label{eq:iverson}
[P] = \begin{cases} 1 & \text{if } P \text{ is true;} \\ 0 & \text{otherwise,} \end{cases}
\end{equation}

and the $\operatorname{softmax}$ function is

\begin{equation}
\label{eq:softmax}
\operatorname{softmax}_k(A_i)={\frac {e^{A_i}}{\sum_{j=1}^k e^{A_j}}}.
\end{equation}

$p_{A_i}(a_i)$ presents the probability that the $i$-th element is equal to $a_i$. Since the $\operatorname{softmax}_k$ function is order-preserving, there exists exactly one threshold $\tau(A_2, A_3, ..., A_k; \tau_k)$ such that ${e^{\tau(A_2, A_3, ..., A_k; \tau_k)}/({e^{\tau(A_2, A_3, ..., A_k; \tau_k)}+\sum_{j=2}^K e^{a_j}}})=\tau_k$.

In other words, we can convert the after-softmax threshold $\tau_k$ to a before-softmax threshold $\tau(A_2, A_3, ..., A_k; \tau_k)$ for $A_1$. Then the Eq.~(\ref{eq:pro_th1_1}) can be express as

\begin{equation}
\underbrace{\int^{\infty}_{-\infty} ... \int^{\infty}_{-\infty}}_{k-1} \int^{\tau^s(k)}_{-\infty}\prod \limits_{i=1}^k p_{A_i}(a_i)\ da_i = 0.5.
\end{equation}

When we takes the $\{k+1\}$-th variable $A_{k+1}$ into consider.
Since the $\sum_{i=2}^k e^{A_i}<\sum_{i=2}^{k+1} e^{A_i}$ is always satisfied, $\tau(A_2, A_3, ..., A_{k}, A_{k+1}; \tau_k)>\tau(A_2, A_3, ..., A_{k}; \tau_k)$ is also always satisfied. Consider the function $F(x)$, which is defined below:

\begin{align}
    F(x)=\underbrace{\int^{\infty}_{-\infty} ... \int^{\infty}_{-\infty}}_{k-1} \int^{x}_{-\infty}\prod \limits_{i=1}^k p_{A_i}(a_i)\ da_i
\end{align}

$F(x)$ is a strictly monotone increasing function, which means $F(\tau(A_2, A_3, ..., A_{k}, A_{k+1}; \tau_k))>F(\tau(A_2, A_3, ..., A_{k}; \tau_k))$.

Then we have

\begin{align}
&\underbrace{\int^{\infty}_{-\infty} \int^{\infty}_{-\infty} ... \int^{\infty}_{-\infty}}_{k} \int^{\tau(A_2, A_3, ..., A_{k}, A_{k+1}; \tau_k)}_{-\infty}\prod \limits_{i=1}^{k+1} p_{A_i}(a_i)\ da_i.\\
>&\underbrace{\int^{\infty}_{-\infty} ... \int^{\infty}_{-\infty}}_{k-1} \int^{\infty}_{-\infty} \int^{\tau(A_2, A_3, ..., A_{k}; \tau_k)}_{-\infty}\prod \limits_{i=1}^{k+1} p_{A_i}(a_i)\ da_i\\
=&\underbrace{\int^{\infty}_{-\infty} ... \int^{\infty}_{-\infty}}_{k-1} \int^{\tau(A_2, A_3, ..., A_{k}; \tau_k)}_{-\infty}\prod \limits_{i=1}^{k} p_{A_i}(a_i)\ da_i\\
=& 0.5.
\end{align}

Since the information entropy of $\operatorname{sign}(\operatorname{softmax}_{K+1}(A_1-\tau_{K+1}))$ is maximized, we have

\begin{equation}
\underbrace{\int^{\infty}_{-\infty} \int^{\infty}_{-\infty} ... \int^{\infty}_{-\infty}}_{k} \int^{\tau(A_2, A_3, ..., A_{k}, A_{k+1}; \tau_{k+1})}_{-\infty}\prod \limits_{i=1}^{k+1} p_{A_i}(a_i)\ da_i = 0.5,
\end{equation}

thus,

\begin{align}
&\underbrace{\int^{\infty}_{-\infty} \int^{\infty}_{-\infty} ... \int^{\infty}_{-\infty}}_{k} \int^{\tau(A_2, A_3, ..., A_{k}, A_{k+1}; \tau_{k+1})}_{-\infty}\prod \limits_{i=1}^{k+1} p_{A_i}(a_i)\ da_i \\
<&\underbrace{\int^{\infty}_{-\infty} \int^{\infty}_{-\infty} ... \int^{\infty}_{-\infty}}_{k} \int^{\tau(A_2, A_3, ..., A_{k}, A_{k+1}; \tau_k)}_{-\infty}\prod \limits_{i=1}^{k+1} p_{A_i}(a_i)\ da_i.
\end{align}

Then we can get $\tau_{k+1}<\tau_{k}$.
Therefore, the threshold $\tau$  which maximizes the information entropy $\mathcal{H}(\mathbf{\hat{B}_{A}}^s)$ is negatively correlated to the number of elements $k$.


\end{proof}

\subsection{Proof of the Theorem 2}
\label{app:ProofoftheTheorem2}

\begin{theorem1}
When the binarized query $\mathbf{B_Q}=\operatorname{sign}(\mathbf Q)\in\{-1, 1\}^{N \times D}$ and key $\mathbf{B_K}=\operatorname{sign}(\mathbf K)\in\{-1, 1\}^{N\times D}$ are entropy maximized in binarized attention, the probability mass function of each element $\mathbf A_{ij},\ i,j\in[1,N]$ sampled from attention score $\mathbf A = \mathbf{B_Q}\otimes \mathbf{B_K}^{\top}$ can be represented as $p_A(2i-D)={0.5}^D C_D^i,\ i\in[0, D]$, which approximates the Gaussian distribution $\mathcal{N}(0, D)$.
\end{theorem1}

\begin{proof}
First, we prove that the distribution of $A_{ij}$ can be approximated as a normal distribution. Considering the definition of $A$, $A_{ij}$ can be expressed as 
$$A_{ij}=\sum\limits_{l=1}^{D}{B_{\mathbf{Q},il}\times B_{\mathbf{K},jl}},$$ 

where $B_{\mathbf{Q},il}$ represents the $j$-th element of $i$-th vector of $B_{\mathbf{Q}}$ and $B_{\mathbf{K},il}$ represents the $j$-th element of $i$-th vector of $B_{\mathbf{K}}$. The value of the element $B_{\mathbf{Q},il}\times B_{\mathbf{K},jl}$ can be expressed as

\begin{equation}
B_{\mathbf{Q},il}\times B_{\mathbf{K},jl}=
\begin{cases}
1 , \quad &\mathrm{if}\ B_{\mathbf{Q},il} \veebar B_{\mathbf{K},jl} = 1 \\
-1 , \quad &\mathrm{if}\ B_{\mathbf{Q},il} \veebar B_{\mathbf{K},jl} = -1.
\end{cases}
\end{equation}

The $B_{\mathbf{Q},il}\times B_{\mathbf{K},jl}$ only can take from two values and its value can be considered as the result of one Bernoulli trial. Thus for the random variable $A_{ij}$ sampled from the output tensor $A$, the probability mass function, $p_A$ can be expressed as

\begin{equation}
\label{eq:p_Bernoulli}
p_A(2i-D)=C_D^i\, p_{e}^i (1-p_{e})^{D-i},
\end{equation}

where $p_{e}$ denotes the probability that the element $B_{\mathbf{Q},il}\times B_{\mathbf{K},jl}$ takes $1$. According to the \textit{De Moivre–Laplace} theorem~\citep{walker1985moivre}, the normal distribution $\mathcal{N}(\mu, \sigma^2)$ can be used as an approximation of the binomial distribution under certain conditions, and the $p_A(2i-D)$ can be approximated as

\begin{equation}
p_A(2i-D)=C_D^i\, p_{e}^i (1-p_{e})^{D-i}\simeq \frac{1}{\sqrt{2 \pi Dp_{e}(1-p_{e})}}\,e^{-\frac{(i-Dp_{e})^2}{2Dp_{e}(1-p_{e})}},
\end{equation}

and then, we can get the mean $\mu=0$ and variance $\sigma=\sqrt{D}$ of the approximated distribution $\mathcal{N}$. Now we give proof of this below.

According to Proposition~\ref{prop:linear_entropy}, both $B_{\mathbf{Q},il}$ and $B_{\mathbf{K},il}$ have equal probability to be $1$ or $-1$, which means $p_e=0.5$. Then we can rewrite the equation as

\begin{equation}
    p_A(2i-D)={0.5}^D C_D^i, i\in\{0, 1, 2, ..., D\}.
\end{equation}

Then we move to calculate the mean and standard variation of this distribution. The mean of this distribution is defined as

\begin{equation}
    \mu (p_A) = \sum (2i-D) {0.5}^D C_D^i, i\in\{0, 1, 2, ..., D\}.
\end{equation}

By the virtue of binomial coefficient, we have

\begin{align}
    (2i-D){0.5}^D C_D^i +(2(D-i)-D){0.5}^D C_D^{D-i} & = {0.5}^D((2i-D)C_D^i+(D-2i)C_D^{D-i}) \\
    & = {0.5}^D((2i-D)C_D^i+(D-2i)C_D^i) \\
    & = 0.
\end{align}

Besides, when $D$ is an even number, we have $(2i-D) {0.5}^D C_D^i=0, i=\frac{D}{2}$. These equations prove the symmetry of function $(2i-D) {0.5}^D C_D^i$. Finally, we have

\begin{align}
     \mu (p_A) &= \sum (2i-D) {0.5}^D C_D^i, i\in\{0, 1, 2, ..., D\} \\
     & = \sum((2i-D)0.5^D C_D^i+(2(D-i)-D){0.5}^D C_D^{D-i}), i\in \{0, 1, 2, ..., \frac{D}{2}\} \\
     & = 0.
\end{align}

The standard variation of $p_A$ is defined as

\begin{align}
    \sigma (p_A) &= \sqrt{\left(\sum {|2i-D|}^2 {0.5}^D C_D^i\right)} \\
    & =\sqrt{{\sum \left(4i^2-4iD+D^2\right) {0.5}^D C_D^i}} \\
    & = \sqrt{{0.5}^D\left(4\sum i^2 C_D^i-4D\sum i C_D^i + D^2\sum C_D^i\right)}.
\label{eq:appendix_std}
\end{align}

To calculate the standard variation of $p_Z$, we use Binomial Theorem and have several identical equations:

\begin{align}
    \sum C_D^i &=(1+1)^D=2^D \\
    \sum i C_D^i &= D(1+1)^{D-1}=D2^{D-1} \\
    \sum i^2 C_D^i &= D(D+1)(1+1)^{D-2}=D(D+1)D2^{D-2}.
\end{align}

These identical equations help simplify Eq.~(\ref{eq:appendix_std}):

\begin{align}
    \sigma (p_A) &= \sqrt{{0.5}^D\left(4\sum i^2 C_D^i-4D\sum i C_D^i + D^2\sum C_D^i\right)} \\
    & = \sqrt{{0.5}^D(4D(D+1)2^{D-2}-4D^2 2^{D-1}+D^2 2^D)} \\
    & = \sqrt{{0.5}^D((D^2+D) 2^D -2D^2 2^D + D^2 2^D)} \\
    & = \sqrt{{0.5}^D(D 2^D)} \\
    & = \sqrt{D}.
\end{align}

Now we proved that, the distribution of output is approximate normal distribution $\mathcal{N}(0, D)$.
\end{proof}

\subsection{BAMM Operation in Bi-Attention}
\label{app:OperationinBi-Attention}

In the Bi-Attention structure, we apply the $\operatorname{bool}$ function to obtain the binarized attention weights $\mathbf{B}$ with values of 0 and 1, which maximizes the information entropy while restoring the perception of attention mechanism for the input. However, we noticed that in the actual hardware deployment, the 1-bit matrix stored in the hardware is unified into the same form (with binary values of 1 and -1) and is supported by most existing hardware.

Therefore, we propose a new bitwise operation $\boxtimes$ to support the computation between the binarized attention weight $\operatorname{bool}(\mathbf{A})$ and the binarized value $\mathbf{B_V}$ during inference, which is defined as

\begin{align}
\operatorname{bool}(\mathbf{A})\boxtimes\mathbf{B_V}=\left(\mathbf{B_A}'\otimes\mathbf{B_V}+\mathbf{B_A}'\otimes\mathbf{1}\right)\gg 1
\end{align}

where $\mathbf{B_A}'\in\{-1, 1\}^{N\times D}$ is the representation of $\operatorname{bool}(\mathbf{A})$ on hardware that $\mathbf{B_A}'_{ij}=-1$ where $\operatorname{bool}(\mathbf{A})_{ij}=0$, $\mathbf{1}\in \{1\}^{N\times D}$ is an all ones matrix, and $\otimes$ and $\ll$ is the bitwise matrix multiplication and bit-shift operation.
The $\boxtimes$ can also be simulated as the common full-precision multiplication as the $\otimes$ during training.

We show the calculation process of the proposed $\boxtimes$ operation in Figure~\ref{fig:OperationinBi-Attention}, which also concludes the process of $\otimes$.

\begin{figure}
\includegraphics[width=1\textwidth]{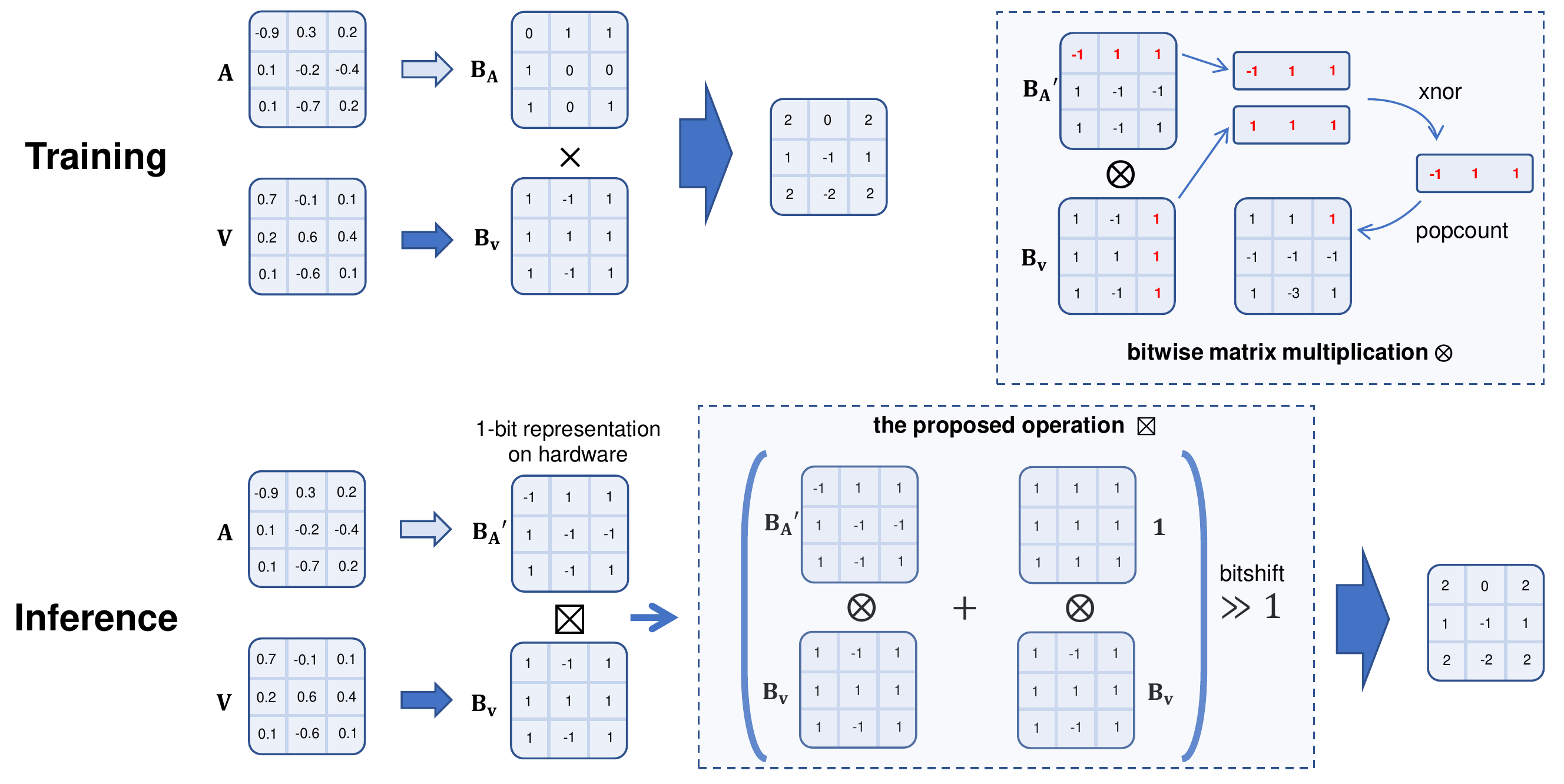}
\caption{The calculation process of proposed $\boxtimes$ and $\otimes$ in Bi-Attention}
\label{fig:OperationinBi-Attention}
\end{figure}

\subsection{Proof and Discussion of the Theorem 3}
\label{app:ProofoftheTheorem3}

\begin{theorem}
\label{theo:binary_representation}
Given the variables $X$ and ${X}_T$ follow $\mathcal{N}(0, \sigma_1),\mathcal{N}(0, \sigma_2)$ respectively, the proportion of optimization direction error is defined as $p_{\text{error }Q\text{-bit}}=p(\operatorname{sign}(X-{X}_T) \neq \operatorname{sign}(\operatorname{quantize}_Q(X)-{X}_T))$, where $\operatorname{quantize}_Q$ denotes the $Q$-bit symmetric quantization. As $Q$ reduces from $8$ to $1$, $p_{\text{error }Q\text{-bit}}$ becomes larger.
\end{theorem}

\begin{proof}

Given the random variables $X\sim\mathcal{N}(0, \sigma_1)$ and $X_T\sim\mathcal{N}(0, \sigma_2)$, the $Q$-bit symmetric quantization function $\operatorname{quantize}_Q$ is expressed as (take $X$ as an example)

\begin{equation}
\label{eq:quant}
\operatorname{quantize}_Q(X)=
\begin{cases}
-L,& \text{ if } x < -L,\\
\lfloor \frac{(2^Q - 1)X}{2L} + 0.5\rfloor \frac{2L}{2^Q-1},& \text{ if } -L \leq X \leq L,\\
L,& \text{ if } x > L,
\end{cases}
\end{equation}

where the $\lfloor\cdot\rfloor$ denotes the round down function, and the range $[-L, L]$ is divided into $2^Q-1$ inter.
The optimization direction error occurs when $\operatorname{sign}(X-\hat{X})=\operatorname{sign}(\operatorname{quantize}_Q(X)-{X}_T)$, i.e., $X>X_T \ and \ \operatorname{quantize}_Q(X)<X_T$ \textbf{or} $X<X_T \ and \ \operatorname{quantize}_Q(X)>X_T$.

(1) When $-L<X<L$, $\lfloor \frac{(2^Q - 1)X}{2L}\rfloor \frac{2L}{2^Q-1}<\operatorname{quantize}_Q(X)<\lfloor \frac{(2^Q - 1)X}{2L} + 1\rfloor \frac{2L}{2^Q-1}$.

a) If $X_T < \lfloor \frac{(2^Q - 1)X}{2L}\rfloor \frac{2L}{2^Q-1}$, since $\lfloor \frac{(2^Q - 1)X}{2L}\rfloor \frac{2L}{2^Q-1}<X$, we have $X_T < X$. And since $\lfloor \frac{(2^Q - 1)X}{2L}\rfloor \frac{2L}{2^Q-1}<\operatorname{quantize}_Q(X)$, $X_T<\operatorname{quantize}_Q(X)$. Thus, the optimization direction is always right in this case.

b) If $X_T > \lfloor \frac{(2^Q - 1)X}{2L}+1\rfloor \frac{2L}{2^Q-1}$, since $\lfloor \frac{(2^Q - 1)X}{2L}+1\rfloor \frac{2L}{2^Q-1}>X$, we have $X_T>X$. And since $\lfloor \frac{(2^Q - 1)X}{2L}\rfloor \frac{2L}{2^Q-1}>\operatorname{quantize}_Q(X)$, $X_T>\operatorname{quantize}_Q(X)$. Thus, the optimization direction is always right in this case.

c) If $\lfloor \frac{(2^Q - 1)X}{2L}\rfloor \frac{2L}{2^Q-1}\leq X_T\leq\lfloor \frac{(2^Q - 1)X}{2L}+1\rfloor \frac{2L}{2^Q-1}$, first, the probability of $X>X_T \ and \ \operatorname{quantize}_Q(X)<X_T$ can be calculated as:

\begin{align}
p_{\text{error1 }Q}=&\int_{-L}^{L}\int_{\lfloor \frac{(2^Q - 1)X}{2L}+0.5\rfloor \frac{2L}{2^Q-1}}^{X} f_{X, X_T}(X, X_T)\\
&\left[{\lfloor \frac{(2^Q - 1)X}{2L}+0.5\rfloor<X}\right] d X_T d X,
\end{align}

where $f_{X, X_T}(\cdot, \cdot)$ is the probability density function of the joint probability distribution for $\{X, X_T\}$, and $[\cdot]$ denotes the $Iverson\ bracket$ as defined in Eq.~(\ref{eq:pro_th1_1}). 

Then we get the probability of $X<X_T \ and \ \operatorname{quantize}_Q(X)>X_T$ as

\begin{align}
p_{\text{error2 }Q}=&\int_{-L}^{L}\int_{X}^{\lfloor \frac{(2^Q - 1)X}{2L}+0.5\rfloor \frac{2L}{2^Q-1}} f_{X, X_T}(X, X_T)\\
&\left[{\lfloor \frac{(2^Q - 1)X}{2L}+0.5\rfloor > X}\right] d X_T d X.
\end{align}

Since $f_{X, X_T}(X, X_T)\geq0$ is constant established, $p_{\text{error1 }Q}$ and $p_{\text{error2 }Q}$ increases as $Q$ becomes smaller.

(2) When $X>L$, $\operatorname{quantize}_Q(X)=L$. $X_T>X>L=\operatorname{quantize}_Q(X)$ is constant established when $X_T>X$, and when $X_T<X$, the probability of $X_T>\operatorname{quantize}_Q(X)=L$ is also constant based on the given distribution of $X_T$. Thus, the optimization direction is always right in this case.

(3) When $X<-L$, $\operatorname{quantize}_Q(X)=-L$. $X_T<X<-L=\operatorname{quantize}_Q(X)$ is constant established when $X_T<X$, and when $X_T>X$, the probability of $X_T>\operatorname{quantize}_Q(X)=-L$ is also constant based on the given distribution of $X_T$. Thus, the optimization direction is always right in this case.

 




\end{proof}

Although we have obtained the correlation between quantization bit-width and error of direction under symmetric quantization, it is difficult to directly give an analytical representation of the error proportion of Gaussian distribution input under Q-bit quantization. Therefore, we use the Monte Carlo algorithm to simulate the probability of directional error caused by $Q$-bit by the error proportion of the pre-quantized data $\mathbf X\in\mathbb{R}^{10000}$, where each element in $\mathbf X$ is sampled from the standard normal distribution.
When the quantization range is $[-1, 1]$, the results are shown in Table~\ref{tab:Simulation}. This result experimentally proved our theorem, and shows that the probability of direction mismatch increases rapidly in 1-bit quantization.

\begin{table}[t]
\centering
\caption{Simulation of error proportion under the $Q$-bit}
\label{tab:Simulation}
{
\begin{tabular}{lcccccccc}
\toprule
\textbf{Bits (Q)} & \textbf{1} & \textbf{2} & \textbf{3} & \textbf{4} & \textbf{5} & \textbf{6} & \textbf{7} & \textbf{8} \\ \midrule
\textbf{Proportion (\%)} & 14.36\% & 6.42\% & 4.35\% & 3.30\% & 2.76\% & 2.56\% & 2.51\% & 2.49\% \\ 
\bottomrule
\end{tabular}}
\end{table}

\subsection{Proof and Discussion of the Equivalence}
\label{app:ProofoftheEquivalence}

Considering the order-preserving characteristic of both $\operatorname{sign}$ and $\operatorname{softmax}$, only the largest $n$ elements (the value of $n$ is related to the variable $k$ for specific matrix $\mathbf A$ in Theorem 1) are binarized to $1$ while others are binarized to $-1$. Therefore, when $\operatorname{sign}(\operatorname{softmax}(\mathbf A) - \tau)$ is optimized to have the maximized information entropy, there exists a corresponding threshold $\phi(\tau, \mathbf A)$ that maximizes the information entropy of $\operatorname{sign}(\mathbf A-\phi(\tau, \mathbf A))$. In addition, the conclusion of Theorem~\ref{prop:attention_score} that $\mathbf A\sim\mathcal N(0, D)$ suggests that a fixed threshold $\phi(\tau, \mathbf A)=0$ maximizes the information entropy of the binarized attention weight.

\begin{proposition1}
\label{prop:equation_questions}
When the elements in tensor $\mathbf A$ are assumed as independent and identically distributed, there exists a $\phi(\tau, \mathbf A))$ making the entropy maximization of $\operatorname{sign}(\mathbf A - \phi(\tau, \mathbf A))$ be equivalent to that of $\operatorname{sign}(\operatorname{softmax}(\mathbf A) - \tau)$.
\end{proposition1}

\begin{proof}

As shown in Eq.~(\ref{eq:information_entropy_define}), the information entropy of $\mathbf{\hat{B}_{\mathbf{w}}}^A =\operatorname{sign}(\operatorname{softmax}(\mathbf{A})-\tau)$ can be expressed as:

\begin{align} 
\mathcal H(\mathbf{\hat{B}_{\mathbf{w}}}^A)=-\left(\int_{\tau}^{\infty} f(a^s) da^s\right) \log\left(\int_{\tau}^{\infty} f(a^s) da^s \right)-\left(\int_{-\infty}^{\tau} f(a^s) da^s\right) \log\left(\int_{-\infty}^{\tau} f(a^s) da^s\right),
\end{align}

where $a^s = \operatorname{softmax}_A(a)$ and $a \in \mathbf A$. Since $\operatorname{softmax}$ function is strictly order-preserving, there exists a $\phi(\tau, \mathbf A)$ makes $\forall a>\tau$, $a^s> \phi(\tau, \mathbf A)$, and it thereby makes following equations established:

\begin{align}
\int_{\tau}^{\infty} f(a^s) da^s=\int_{\phi(\tau, A)}^{\infty} g(a) da, \quad
\int_{-\infty}^{\tau} f(a^s) da^s=\int_{\infty}^{\phi(\tau, A)} g(a) da,
\end{align}

where $f$ and $g$ are the probability density functions of variable $a^s$ and $a$. And the information entropy $\mathcal H(\mathbf{\hat{B}_{\mathbf{w}}}^A) = \mathcal H(\mathbf{\hat{B}_{A}})$, where $\mathbf{B_A} = \operatorname{sign}(\mathbf A-\phi(\tau, \mathbf A))$. Therefore, when $\tau$ is optimized to maximize information, the information of $\mathbf {B_A}$ is also maximized under the threshold $\phi(\tau, \mathbf A)$.

\end{proof}

\section{Experimental Setup}
\label{app:ExperimentalSetup}

\subsection{Dataset and Metrics} 

We evaluate our method on the General Language Understanding Evaluation~\citep{GLUEbenchmark} (GLUE) benchmark which consists of nine basic language tasks. 
And we use the standard metrics for each GLUE task to measure the advantage of our method. We use Spearman Correlation for STS-B, Mathews Correlation Coefficient for CoLA and classification accuracy for the rest tasks. As for MNLI task, we report the accuracy on both in-domain evaluation MNLI-match (MNLI-m) and cross-domain evaluation MNLI-mismatch (MNLI-mm). But we exclude WNLI task as previous studies do for its relatively small data volume and unstable behavior. 
We also theoretically calculate the FLOPs at inference and also report the model size to give a comprehensive comparison on speed and storage. 

We also provide a coarse estimation of model size and floating-point operations (FLOPs) at inference follow \citep{BinaryBERT, DoReFa-Net, birealnet}. The matrix multiplication between an $m$-bit number and an $n$-bit number requires $mn/64$ FLOPs for a CPU with instruction size of 64-bit. Also, every 64 weight and embedding parameters are packed and stored together with a single instruction, so the model is significantly compact.

\subsection{Implementation} 

\textbf{Backbone}. We follow \citep{BinaryBERT} to take full-precision well-trained DynaBERT~\citep{hou2020dynabert} as the teacher model to self-supervise the training of the binarized DynaBERT. The number of transformer layers is $12$, the hidden state size is $768$ and the number of heads in MHA is $12$, which is a typical setting of BERT-base. 
Unlike \citep{BinaryBERT}, we do not pre-train an intermediate model as the initialization, but directly binarize the model from the full-precision one. We use naive sign function to binarize parameters in the forward propagation and STE in the back-propagating, and also use MSE loss to distill the teacher model as described in Sec.~\ref{sec:distill} as the baseline binarized BERT model. 

\textbf{Binarized Layers}. We follow the previous work to binarize the word embedding layer, MHA and FFN in transformer layers, but leave full-precision classifier, position embedding layer, and token type embedding layer~\citep{BinaryBERT}. That is because a common practice for BERT is to trivially cast the regression tasks to multiclass classification tasks, such as STS-B with $5$ classes and MNLI with $3$ classes. Therefore, the answers for above two tasks cannot be encoded by just 1-bit. 

\textbf{Settings}. We use the Adam as our optimizer, and adopt data augmentation on GLUE tasks except MNLI and QQP for the little benefit but it is time-consuming. It is noteworthy that we take more training epochs for every quantization method on each tasks to have a sufficient training, which is $50$ for CoLA, $20$ for MRPC, STS-B and RTE, $10$ for SST-2 and QNLI, $5$ for MNLI and QQP. 

\textbf{Other Quantized BERTs}. We implement TernaryBERT under 2-2-1 and 2-2-2 settings, and BinaryBERT under 1-1-1 and 1-1-2 settings for the compraisonal experiments.

For BinaryBERT~\citep{BinaryBERT}, except 4-bit and 8-bit activation settings mentioned in their paper, the official code (\url{https://github.com/huawei-noah/Pretrained-Language-Model/blob/1dbaf58f17b8eea873d76aa388a6b0534b9ccdec/BinaryBERT/}) provides the implementations for 1-bit or 2-bit activation settings, which are Binary-Weight-Network (BWN) and Ternary-Weight-Network (TWN), respectively, and we completely follow the released code.

Specifically, we apply Binary-Weight-Network (BWN) and Ternary-Weight-Network (TWN) methods to activation under 1-bit and 2-bit input settings (1-1-1 and 1-1-2), respectively. The formulations and implementations are presented in \citep{BinaryBERT} and official codes, respectively:

BWN: 
\begin{equation}
\hat{a}_{i}^{b}=\mathcal{Q}\left(a_{i}^{b}\right)=\beta \cdot \operatorname{sign}\left(a_{i}^{b}\right),\quad \beta=\frac{1}{n}\left\|\mathbf{a}^{b}\right\|_{1},
\end{equation}
TWN: 
\begin{equation}
\hat{a}_{i}^{t}=\mathcal{Q}\left(a_{i}^{t}\right)=\left\{\begin{array}{cl} \beta \cdot \operatorname{sign}\left(a_{i}^{t}\right) & \left|a_{i}^{t}\right| \geq \Delta \\ 0 & \left|a_{i}^{t}\right|<\Delta \end{array}\right. ,
\end{equation}
where $\operatorname{sign}(\cdot)$ is the sign function, $\Delta=\frac{0.7}{n}\left\|\mathbf{w}^{t}\right\|_{1}$ and $\alpha=\frac{1}{|I|} \sum_{i \in \mathcal{I}}\left|w_{i}^{t}\right|$ with $\mathcal{I}=\left\{i \mid \hat{w}_{i}^{t} \neq 0\right\}$.

For TernaryBERT, since there is no recommended setting under 1-bit and 2-bit input in its official code, we use the same implementation as BinaryBERT to quantize activation (BWN for 1-bit and TWN for 2-bit) for fair comparison.

In addition, compared with TernaryBERT and BinaryBERT, under the 1-bit input setting, our BiBERT applies a simple sign function to binarize activation, while the former two apply real-time calculated scaling factors. Therefore, under the same 1-1-1 setting, our BiBERT achieved better results with a smaller amount of computation.

\section{Addition Experiments}

\subsection{Comparison and Discussion of the Baseline}
\label{app:ComparisonofEnhancedBaselines}

\subsubsection{Reasons of Building the original Baseline}

When we built the fully binarized BERT baseline, we considered the following reasons and chose the original baseline settings shown in the paper:

(1) Additional inference computation should be avoided to the greatest extent compared with the BERTs obtained by direct full binarization. We discard the scaling factor for activation in the bi-linear unit since it requires real-time updating during inference and increases floating-point operations. In the attention structure, we directly binarize the activation without re-scaling or mean-shifting.

(2) Since there is no previous work to fully binarize BERT, the existing representative binarization techniques are chosen to build the baseline. The binarization function with a fixed 0 threshold is applied to the original definition of the binarized neural network~\citep{XNOR-Net} and is used by default in most binarization works~\citep{IR-Net,birealnet}, we thereby initially apply this function when building the fully binarized BERT baseline. Besides, it also helps to more objectively evaluate the benefits of maximizing information entropy in the attention structure.

Therefore, the fully binarized baseline under the current BERT architecture applies the representative techniques and almost without additional floating-point computation.

\begin{table}[t]
\centering
\caption{Comparison of strengthened baselines and methods without data augmentation.}
\label{tab:glue_addition}
\setlength{\tabcolsep}{0.2mm}
{\footnotesize
\begin{tabular}{llccccccccccc}
\toprule
\textbf{Solution} &\textbf{Quant} & \begin{tabular}[c]{@{}c@{}}\textbf{\#Bits}\end{tabular} & \begin{tabular}[c]{@{}c@{}}\textbf{$\Delta$FLOPs}$_\text{ (G)}$\end{tabular} & \begin{tabular}[c]{@{}c@{}}\textbf{MNLI}$_\text{-m/mm}$\end{tabular} & \textbf{QQP} & \textbf{QNLI} & \textbf{SST-2} & \textbf{CoLA} & \textbf{STS-B} & \textbf{MRPC} & \textbf{RTE} & \textbf{Avg.} \\ 
\midrule
\multirow{2}{*}{Original} & Baseline & 1-1-1 & 0 & 45.8/47.0 & 73.2 & 66.4 & 77.6 & 11.7 & 7.6 & 70.2 & 54.1 & 50.4 \\
& {BinaryBERT} & 1-1-1 & 0 & 35.6/35.3 & 63.1 & 51.5 & 53.2 & 0 & 6.1 & 68.3 & 52.7 & 40.6 \\
\midrule
Solution 1 & Baseline$_\text{asym}$ & 1-1-1 & 0.074 (15\%) & 45.1/46.3 & 72.9 & 64.3 & 72.8 & 4.6 & 9.8 & 68.3 & 53.1 & 48.6 \\
\midrule
Solution 2 & Baseline$_{\mu}$ & 1-1-1 & 0.076 (16\%) & 48.2/49.5 & 73.8 & 68.7 & 81.9 & 16.9 & 11.5 & 70.0 & 54.9 & 52.8 \\
\midrule
\multirow{2}{*}{Solution 3} & Baseline$_{50\%}$ & 1-1-1 & 0.076 (16\%) & 47.7/49.1 & 74.1 & 67.9 & 80.0 & 14.0 & 11.5 & 69.8 & 54.5 & 52.1 \\
& BinaryBERT$_{50\%}$ & 1-1-1 & 0.076 (16\%) & 39.2/40.0 & 66.7 & 59.5 & 54.1 & 4.3 & 6.8 & 68.3 & 53.4 & 43.5 \\
\midrule
Ours & BiBERT & \textbf{1-1-1} & \textbf{0} & \textbf{67.3}/\textbf{68.6} & \textbf{84.8} & \textbf{72.6} & \textbf{88.7} & \textbf{25.4} & \textbf{33.6} & \textbf{72.5} & \textbf{57.4} & \textbf{63.5}\\
\bottomrule
\end{tabular}}
\end{table}

\subsubsection{Solutions of Attention Weight}
Causing by the application of $\operatorname{softmax}$, the attention possibilities in the attention structure follow the probability distribution and (attention weight) are all one after binarization. In fact, we were aware of the problem when building the fully binarized baseline, and built two solutions by existing quantization techniques.

The first solution tried is to degenerate the asymmetric quantization function (usually applied to 2-8 bit quantization~\citep{BinaryBERT,TernaryBERT}) to the 1-bit case, since the method is originally designed to deal with the imbalance of value distribution in quantization:

\textbf{Solution 1} (Baseline$_\text{aysm}$):
\begin{equation}
\label{eq:A1.1}
Q(x)=\left\{
\begin{aligned}
\max(\mathbf x),\quad &x\ge 0.5(\max(\mathbf x)+\min(\mathbf x)),\\
\min(\mathbf x),\quad &\text{otherwise},
\end{aligned}
\right. \quad x\in \mathbf x.
\end{equation}

The other solution is to simply use the mean of the elements as the threshold~\citep{XNOR-Net,IR-Net}:

\textbf{Solution 2} (Baseline$_{\mu}$):
\begin{equation}
Q(\mathbf x)=\operatorname{bool}(\mathbf x-\tau), \quad \tau = \mu(\mathbf x)
\end{equation}

where $\mu(\cdot)$ denotes the mean value.

We also thank the reviewers for providing the Solution 3, which is to use the threshold limiting the zero attention weight to a certain percentage (such as 50\%):

\textbf{Solution 3} (Baseline$_{50\%}$):
\begin{equation}
Q(\mathbf x)=\operatorname{bool}(\mathbf x-\tau), \quad \tau = Q_{50\%}(\mathbf x)
\end{equation}

where $Q_{50\%}(\cdot)$ denotes the quantile of 50\% percentage (median).

The above three solutions are all able to alleviate the problem of fixed all one attention weight. We present the results of these solutions in Table A1.1 and further discuss them in detail.

\textbf{(1) Discussion of Solution 1}

As shown in Eq.~(\ref{eq:A1.1}), since the dynamically threshold ($0.5(\max(\mathbf x)+\min(\mathbf x))$) is applied in this quantizer, the elements of attention weight are quantized to $\max(\mathbf x)$ and $\min(\mathbf x)$ instead of a same value. However, the asymmetrical binarized values obtained by this quantizer make the binarized BERT hard to apply bitwise operations and lose the efficiency advantage brought by binarization (0.074G additional FLOPs), so that this practice is not used in existing binarization works. 
And as the results show, the accuracy of the binarized BERT is also worse than the existing baseline (Baseline 50.4\% vs. Baseline$_\text{asym}$ 48.6\% on average, Row 2 and 4 in Table~\ref{tab:glue_addition}).

\textbf{(2) Discussion of Solution 2}

As the effect of this solution on the weight parameter~\citep{XNOR-Net,IR-Net}, it can ensure diversity of binarized attention weight elements instead of all 1 ($\min(\mathbf x) \le \tau \le \max(\mathbf x)$). But the premise of this practice is the Gaussian (symmetric) distribution of floating-point parameters before binarization~\citep{IR-Net}, while the attention score follows an asymmetric probability distribution, causing the values of binarized activation to imbalanced. 
Therefore, the improvement of this practice in terms of accuracy is also limited (as shown in Row 2 and 5 in Table~\ref{tab:glue_addition}), and this solution also increases 0.076G computational FLOPs compared to the original baseline.

\textbf{(3) Discussion of Solution 3}

The fixed 50\% percentage of zero binarized attention weights suggested by the reviewer are good baseline settings that help to further clarify the entropy maximization motivation of the Bi-Attention in BiBERT. As shown in Table~\ref{tab:glue_addition} (Row 6 and 7), the results of baseline and BinaryBERT under this setting are significantly improved by 1.7\% and 2.9\% on average compare the original results (Row 2 and 3), respectively. And from the perspective of information entropy, a 50\% percentage of zero attention weights can also ensure maximum information. We present the detailed ablation results (10\%-90\% zero attention weight) on STS-B in Table~\ref{tab:ablation_perc}, which show the model maximizing information entropy achieves the best results.

\begin{table}[t]
\centering
\caption{Ablation results with 10\%\~90\% zero attention weight on STS-B.}
\label{tab:ablation_perc}
\setlength{\tabcolsep}{5mm}
{
\begin{tabular}{lccccc}
\toprule
\textbf{Percentage}	& 10\%	& 30\%	& 50\%	& 70\%	& 90\% \\
\midrule
\textbf{Entropy}	& 0.47	& 0.88	& \textbf{1.00}	& 0.88	& 0.47 \\
\midrule
\textbf{Accuracy (\%)} &	8.5	& 8.6	& \textbf{11.5}	& 10.8	& 10.0 \\
\bottomrule
\end{tabular}}
\end{table}

Though forcing a certain ratio of zero attention weights improves the baseline, there exists shortcomings for fully binarized BERTs. First, the calculation of quantile thresholds relies on real-time computation or sorting, which increases computation of the fully binarized BERT in inference (about 0.076G (16\%) additional FLOPs). Moreover, the practice of 50\% quantile threshold should be regarded as the solution to a more stringent optimization problem (rather than the optimization problem in Eq.~(\ref{eq:shift}) of the paper) since it further constrains and maximizes the entropy of each tensor of binarized attention weight instead of optimizing the overall distribution. Thus, the quantile threshold is more restrictive for the binarized attention weights and limits their representation capability, causing the results of obtained fully binarized BERTs lower than those of models applying Bi-Attention.

\subsubsection{Conclusion}

Although the above three solutions improve accuracy to a certain extent, they only bring limited improvements while destroying the advantages of the fully binarized network on computational efficiency. As analyzed in our paper, the existing attention structure is not suitable for directly fully binarizing BERT, which is also the reason we specially designed the Bi-Attention structure for the fully binarized BERT. However, considering the reason that the fixed all one attention weight causes itself and related distillation to fail, and also to more comprehensively compare with our method, we present the results of these solutions (as strengthened baselines).

\subsection{Variants of Binarized Attention Mechanism}
\label{app:biattention}

In binarization literature, function $\operatorname{sign}$ is widely used to quantize a real number $x \in \mathbb{R}$ into $y \in \{-1, 1\}$. However, directly applying $\operatorname{sign}$ on attention weight is against the constraint that attention weight should be in $[0, 1]$. Inspired by this fact, we define function $\operatorname{bool}$ to map $x$ into $\{0, 1\}$. We empirically evaluate these two binarization methods on several GLUE tasks in Table ~\ref{tab:biattention}. 

\begin{table}[ht]
\centering
\captionsetup{width=1\linewidth}
\caption{Comparison of Variants of Binarized Attention Mechanism}
\label{tab:biattention}
{
\begin{tabular}{rcccccc}
\toprule
 &\textbf{Maximizing Entropy} & \textbf{Binarization Method} & \textbf{SST-2} &  \textbf{RTE} & \textbf{CoLA} \\ \midrule
1 & {\XSolidBrush} & $\operatorname{sign}$ & 78.9 & 52.7 & 6.9 \\
2 & {\XSolidBrush} & $\operatorname{bool}$ & 77.6 & 54.1 & 11.7  \\
3 & {\Checkmark} & $\operatorname{sign}$ & 80.7 & 52.7 & 7.3 \\
4 & {\Checkmark} & $\operatorname{bool}$ & 88.7 & 57.4 & 25.4 \\
\bottomrule
\end{tabular}
}
\end{table}

Table~\ref{tab:biattention} shows the performance of four different binarized attention mechanisms. All the experiments use the DMD method and do not use data augmentation. From these results, we can conclude that the traditional binarization method $\operatorname{sign}$ is not suitable to binarize attention weight since it can not selectively neglect the unrelated parts. This operation not only obstacles the optimization process but also degrades the representation ability of the whole attention mechanism. It strongly shows the necessity of using a binarization operator that can focus selectively on parts of the input and ignore the others. 

\section{Visualizations}

\begin{figure}[tp]
\centering
\subfigure[Full-precision]{
\includegraphics[width=\linewidth]{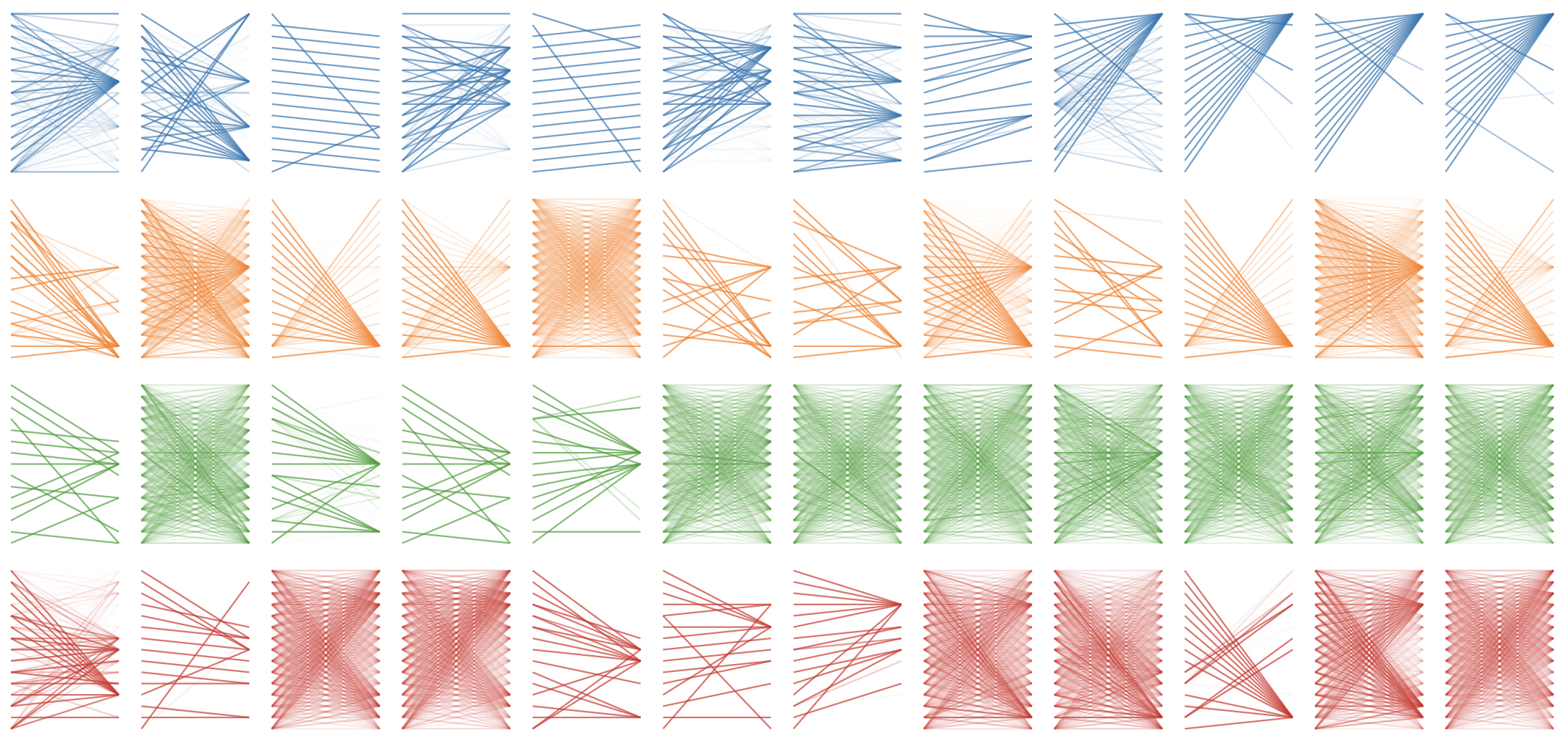}
\label{fig:attention-modelview-fp}
}
\subfigure[Fully binarized BERT baseline]{
\includegraphics[width=\linewidth]{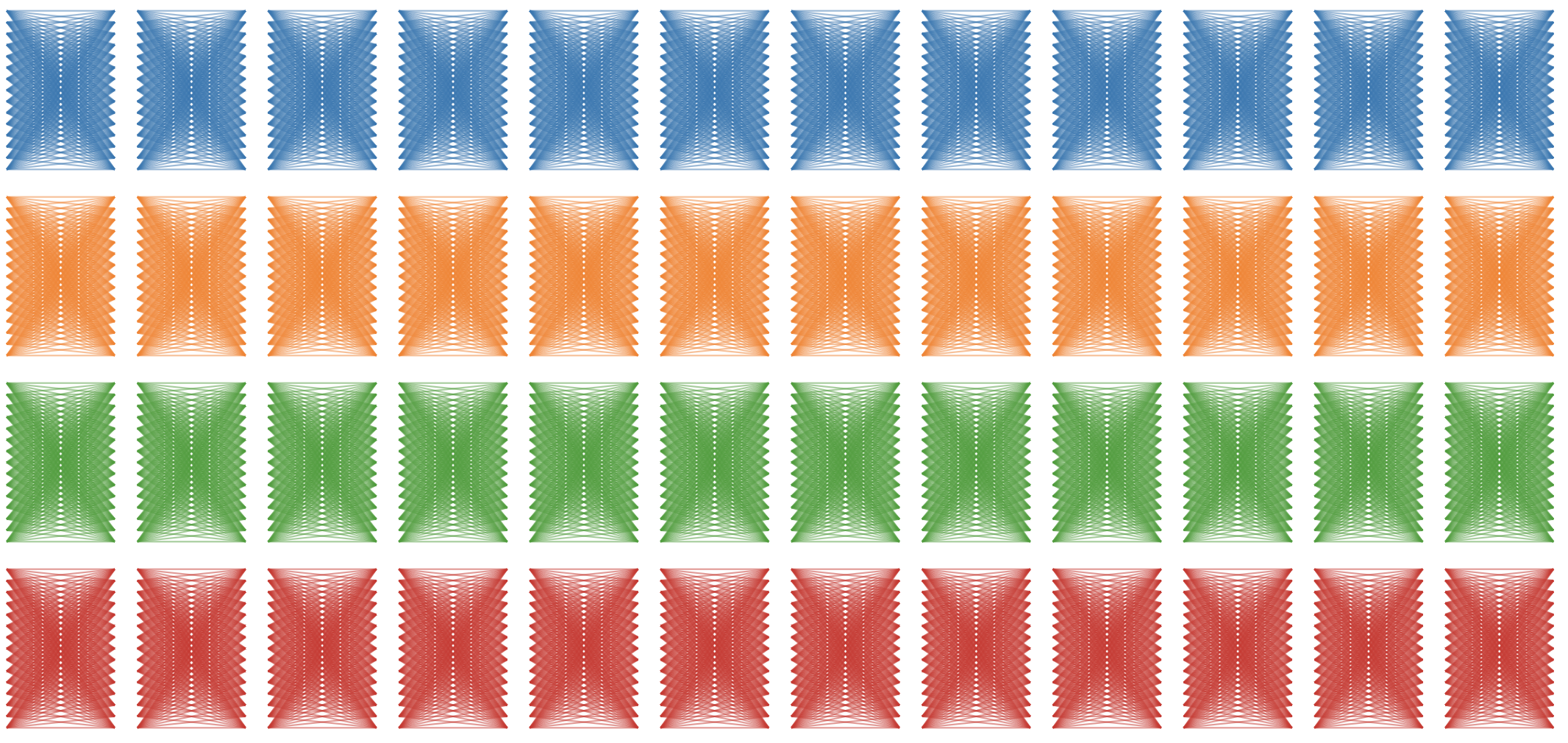}
\label{fig:attention-modelview-baseline}
}
\subfigure[BiBERT (Ours)]{
\includegraphics[width=\linewidth]{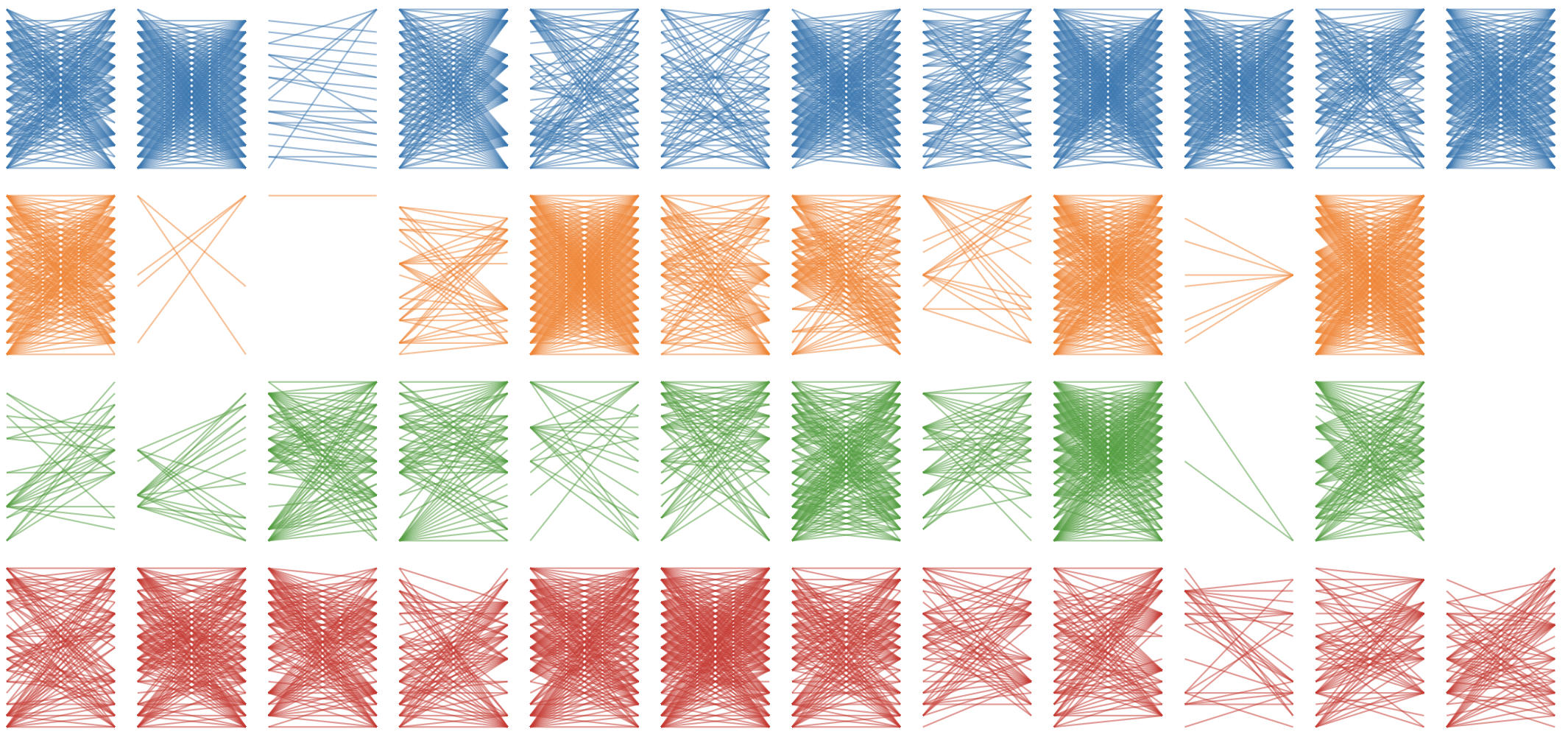}
\label{fig:attention-modelview-ours}
}
\caption{Attention pattern for full-precision model, fully binarized BERT baseline and our BiBERT on SST-2 task. Includes all heads in layers 0-3. The visualization tools is adapted from \citep{vig2019multiscale} }
\label{fig:attention-modelview}
\end{figure}

In Figure~\ref{fig:attention-modelview}, we provide a coarse but bird-view of attention patterns across layers and heads in different model for same inputs on SST-2 task, including the full-precision, fully binarized BERT baseline and our BiBERT. 
As can be seen in the figure, full-precision model always elicits certain tendency towards different contents, and each head captures a unique pattern with bias. 
However, the attention perception is totally lost when fully binarized the BERT model with baseline methods. The attention weight is identical regardless of inputs, and exactly degrades to a fully connection. 

As for our BiBERT, it delivers various attention patterns among layers and heads. We concede that some patterns seem to be completely different from the full-precision ones, some are even blank. But we venture to attributes the phenomenon to the special design of our Bi-Attention structure. We ensure the overall probability of attention weights to have a information entropy maximized distribution, but the function and quality of each head is special after an adaptive training. 
It results in a selective attention according to the inputs, therefore we revive the perception of the model, and make it be able to elicit more diverse activation patterns. 

\end{document}